\documentclass{article}

\PassOptionsToPackage{numbers, compress}{natbib}

\usepackage[preprint]{neurips_2026}

\usepackage[utf8]{inputenc} 
\usepackage[T1]{fontenc}    
\usepackage{hyperref}       
\usepackage{url}            
\usepackage{booktabs}       
\usepackage{amsfonts}       
\usepackage{nicefrac}       
\usepackage{microtype}      
\usepackage{xcolor}         
\usepackage{graphicx}
\usepackage{subcaption}
\usepackage{amssymb}
\usepackage{algorithm}
\usepackage{algorithmic}
\usepackage{amsmath,amssymb,amsthm,amsfonts}
\usepackage{adjustbox}
\usepackage{multirow}
\usepackage{caption}
\usepackage{colortbl}
\usepackage{xcolor} 
\definecolor{lightgray}{gray}{0.9} 
\usepackage{graphicx}
\usepackage{wrapfig}
\usepackage{booktabs}
\usepackage{hyperref}
\usepackage{natbib}
\usepackage{microtype}
\usepackage{enumitem}
\usepackage{fancyhdr}
\usepackage{todonotes}
\usepackage[most]{tcolorbox}
\usepackage[toc,page,header]{appendix}
\usepackage{minitoc}

\renewcommand \thepart{}
\renewcommand \partname{}

\graphicspath{{figures/}}
\DeclareSymbolFont{matha}{OML}{txmi}{m}{it}
\DeclareMathSymbol{\varv}{\mathord}{matha}{118}

\newtheorem{theorem}{Theorem}

\theoremstyle{definition}
\newtheorem{assumption}{Assumption}
\newtheorem{remark}{Remark}
\newtheorem{counterexample}{Counterexample}

\definecolor{theoremblue}{RGB}{36, 92, 160}
\definecolor{theorembg}{RGB}{245, 248, 252}

\definecolor{thmgray}{RGB}{248,248,248}
\definecolor{thmline}{RGB}{120,120,120}

\newtcolorbox{neuraltheorem}[1][]{
	enhanced,
	breakable,
	colback=thmgray,
	colframe=thmline,
	boxrule=0.4pt,
	arc=1pt,
	left=6pt,
	right=6pt,
	top=6pt,
	bottom=6pt,
	before skip=8pt,
	after skip=8pt,
}

\newtcolorbox{neuralcounterexample}[1][]{
	enhanced,
	breakable,
	colback=white,
	colframe=black!35,
	boxrule=0pt,
	leftrule=0.7pt,
	arc=0pt,
	left=6pt,
	right=2pt,
	top=4pt,
	bottom=4pt,
	before skip=6pt,
	after skip=6pt,
	borderline west={0.7pt}{0pt}{black!35,dashed},
	#1
}

\title{Flatness and Gradient Alignment Are Both Necessary: Spectral-Aware Gradient-Aligned Exploration for Multi-Distribution Learning}

\author{%
  Aristotelis Ballas \\
  Department of Informatics and Telematics\\
  Harokopio University of Athens\\
  Omirou 9, Athens, Greece\\
  \texttt{aballas@hua.gr} \\
   \And
   Christos Diou \\
   Department of Informatics and Telematics \\
   Harokopio University of Athens\\
   Omirou 9, Athens, Greece \\
   \texttt{cdiou@hua.gr} \\
}

\begin{document}

\doparttoc 
\faketableofcontents


\maketitle

\begin{abstract}
  Sharpness-aware and gradient-alignment methods have been shown to improve generalization, however each family of methods targets a single geometric property of the loss landscape, while ignoring the other. In this paper, we show that this omission is structurally unavoidable and that both \textit{flatness} and \textit{gradient alignment} should be considered in multi-distribution learning settings. Specifically, we derive an excess-risk decomposition that yields two additive leading-order terms: (i) an alignment term, controlled by the trace of $\bar{H}^{-1}\Sigma_g$ and (ii) a curvature term, controlled by $\bar{H}$, where $\bar{H}$ is the average Hessian and $\Sigma_g$ is the covariance of the gradient across distributions. Notably, $\bar{H}$ appears inverted in one and non-inverted in the other. We further show, via a counterexample, that neither quantity bounds the other in general, so no algorithm targeting only one term can guarantee low excess risk. Motivated by this decomposition, we propose SAGE (Spectral-Aware Gradient-Aligned Exploration) that targets both terms. The curvature component replaces SAM's gradient-scaled perturbation with the polar factor of each layer's gradient matrix, computed via Newton–Schulz iteration, so that the ascent step probes all directions with similar magnitude. On the other hand, the alignment component injects isotropic noise at the descent step, the magnitude of which scales with cross-distribution gradient disagreement. Experiments on five domain-generalization and two multi-task learning benchmarks show that the proposed method establishes a new state-of-the-art on DomainBed and acts as a general-purpose improvement to base MTL solvers, remaining competitive with, or even surpassing, state-of-the-art methods\footnote{Code will be published upon acceptance.}.
\end{abstract}

\section{Introduction}
Understanding the geometry of loss landscapes~\cite{li2018visualizing, fort2019large} in overparameterized neural networks has 
become a central lens for interpreting optimization behavior and why certain solutions generalize better than others~\cite{wu2017towards, li2018visualizing, fort2019large, rangamani2020loss, keskar2016large, jiang2019fantastic}. Specifically, two geometric properties have emerged as particularly influential: the \textit{flatness}~\cite{foret2020sharpness, kwon2021asam, kim2022fisher, zhang2024domaininspired, ban2025samo} of the loss surface around a minimum, and the \textit{alignment}~\cite{mansilla2021domain, shi2021gradient, le2024gradient, ballas2025gradient, yu2020gradient, liu2021conflict, Senushkin_2023_CVPR} of gradients computed across different training distributions, e.g., domains in the case of Domain Generalization (DG)~\cite{wang2022generalizing, zhou_domain_2023}, or tasks, in Multi-Task Learning (MTL)~\cite{9392366}. 

Even though both flatness and gradient alignment have been studied extensively in isolation, existing methods typically pursue each property as a standalone objective~\cite{foret2020sharpness, yu2020gradient, mansilla2021domain, shi2021gradient, li2023enhancing, zhang2025preconditioned, ballas2025gradient}, while their interplay has received limited formal treatment~\cite{wang2023sharpness, kim2022fisher}. Intuitively, a flat minimum may sit in a region where distribution-specific gradients conflict, while an aligned minimum may lie in a sharp basin where small perturbations inflate the loss. We argue that neither property alone is sufficient and that robust 
generalization benefits from minima that are simultaneously flat across directions and exhibit improved gradient agreement across training 
sub-objectives.

To justify this claim, we derive an excess-risk decomposition
(Theorem~\ref{thm:decomposition}) that yields two additive leading-order terms:
(i) an alignment term scaling as $\mathrm{tr}(\bar{H}^{-1}\Sigma_g)$, where
$\bar{H}$ is the average curvature and $\Sigma_g$ is the
gradient covariance matrix, across distributions and (ii) a curvature term scaling as
$\mathrm{tr}(\bar{H})$. $\bar{H}$ appears \textit{inverted} in the first term and
\textit{non-inverted} in the other and, as a result, targeting only one of the two
during optimization can leave the other unconstrained. We formalize this
independence in Counterexample~\ref{thm:decoupling}, which shows that neither quantity
bounds the other, even in the simple quadratic family.

Motivated by this decomposition, and based on the fact that first-order
optimization methods (including first-order flatness-aware methods) ignore
curvature, we propose \textbf{SAGE} (Spectral-Aware Gradient-Aligned
Exploration) a method that jointly targets both curvature and gradient agreement
via principled heuristics. For the curvature term ($\mathrm{tr}(\bar{H})$), SAGE
replaces SAM's~\cite{foret2020sharpness} gradient-scaled ascent perturbation
with the polar factor $UV^T$ of each layer's gradient matrix, computed
efficiently via Newton--Schulz iteration~\cite{bernstein2024old}, scaled by the
layer Frobenius norm so that all spectral directions are probed with comparable
magnitude \footnote{We do not explicitly address the decomposition via
  second-order approximations~\cite{kim2022fisher, zhang2023gradient,
    shin2025sassha} to avoid additional computational burden. In addition, the
  same orthogonalization in the gradient matrix singular basis has been used in recent
  optimizers~\cite{jordan2024muon, gupta2018shampoo} and corresponds to the
  steepest-descent step under the spectral norm~\cite{bernstein2024old}.}
(details in Section \ref{sec:methods}). For the alignment term, SAGE injects
isotropic Gaussian noise at the descent step with magnitude scaling with
cross-distribution gradient disagreement, biasing the optimizer away from
high-conflict regions. Empirical evaluation on DG~\cite{wang2022generalizing,
  zhou_domain_2023} and MTL~\cite{9392366} benchmarks indicates that SAGE
compares favorably to both sharpness-aware and gradient-alignment baselines
across seven datasets.

Our key contributions can be summarized as follows:

\begin{itemize}
	\item We derive an excess-risk decomposition for multi-distribution learning (Theorem~\ref{thm:decomposition}) that splits the leading-order risk into two additive terms, an alignment term scaling as $\mathrm{tr}(\bar{H}^{-1}\Sigma_g)$ and a curvature term scaling as $\mathrm{tr}(\bar{H})$, with $\bar{H}$ appearing inverted in one and non-inverted in the other.
	
	\item We provide a separation result (Counterexample~\ref{thm:decoupling}) showing that the two terms are independently controllable for quadratic losses, showing that even in a simple scenario, targeting each property in isolation may prove insufficient.
	
	\item Based on the above, we propose SAGE, a method targeting both terms of the decomposition; a spectral perturbation (curvature) and a gradient alignment-driven noise injection (alignment).
	
	\item We evaluate our proposed method on five DG and two MTL benchmarks, demonstrating competitive results against both sharpness-aware and gradient-alignment baselines.
\end{itemize}

\section{Theoretical Analysis: Flatness and Gradient Alignment in Multi-Distribution Learning}
\label{sec:theory}

\subsection{Setup and Notation}
\label{sec:setup}

Let $\mathcal{X}$ be the input space and $\mathcal{Y}$ the label space. Let $e$ be a joint distribution $P^{e}_{XY}$ over $\mathcal{X}\times\mathcal{Y}$ (denoted as \emph{environment} in \cite{arjovsky2019irm}). For example, different distributions/environments may correspond to different \emph{domains} in the domain generalization literature or \emph{tasks} in the multi-task literature. Let $\mathcal{E}$ be a set of distributions and $\mathcal{P}$ a meta-distribution over $\mathcal{E}$. For each $e \in \mathcal{E}$, let $\mathcal{L}_e : \Theta \to \mathbb{R}$ denote a per-distribution loss, where $\Theta \subseteq \mathbb{R}^d$ is an open parameter space. The \emph{population risk} is:
\begin{equation}
	R(\theta) := \mathbb{E}_{e \sim \mathcal{P}}[\mathcal{L}_e(\theta)],
	\label{eq:population_risk}
\end{equation}
and the \emph{empirical multi-distribution risk} over $K$ iid training distributions $e_1, \dots, e_K \sim \mathcal{P}$ is
\begin{equation}
	\hat{R}(\theta) := \frac{1}{K}\sum_{k=1}^K \mathcal{L}_{e_k}(\theta).
	\label{eq:empirical_risk}
\end{equation}
This is the standard objective minimized by ERM. 

At any base point $\theta_0 \in \Theta$ we introduce the per-distribution gradient and Hessian,
\vspace{-0.1em}
\begin{equation}
	g_e := \nabla_\theta \mathcal{L}_e(\theta_0), \qquad H_e := \nabla_\theta^2 \mathcal{L}_e(\theta_0),
\end{equation}
and the following cross-distribution statistics,
\vspace{-0.1em}
\begin{equation}
	\bar{g} := \mathbb{E}_e[g_e], \qquad \bar{H} := \mathbb{E}_e[H_e], \qquad \Sigma_g := \mathrm{Cov}_e(g_e) = \mathbb{E}_e\!\left[(g_e - \bar{g})(g_e - \bar{g})^T\right].
	\label{eq:summary_stats}
\end{equation}
\vspace{-0.1em}
$\bar{H}$ is the average curvature targeted by sharpness-aware methods, $\Sigma_g$ is the cross-distribution gradient covariance targeted (implicitly or explicitly) by gradient-alignment methods, and $\bar{g}$  is the expected gradient direction, vanishing ($\bar{g} = 0$) at stationary points of $R$.

\begin{assumption}[Regularity]
	\label{ass:regularity}
	Each $\mathcal{L}_e$ is three times continuously differentiable (i.e., $C^3$) in $\theta$, with third derivatives uniformly bounded in a neighborhood of $\theta^\star := \arg\min_\theta R(\theta)$, and $\bar{H}(\theta^\star) \succ 0$ (Positive-Definite). The gradients $g_e(\theta^\star)$ have finite second moments under $\mathcal{P}$.
\end{assumption}

\subsection{Excess-Risk Decomposition}
\label{sec:decomposition}

The main result of our analysis is a decomposition of the expected excess risk of the first-order empirical minimizer into two additive terms, one of which depends on $\Sigma_g$ through $\bar{H}^{-1}$ and the other of which depends on $\bar{H}$ directly. 

\begin{neuraltheorem}
\begin{theorem}[Multi-distribution excess-risk decomposition]
	\label{thm:decomposition}
	Under Assumption~\ref{ass:regularity}, let $\hat{\theta}$ denote the minimizer of $\hat{R}$ over $K$ iid training distributions, and let $\xi \sim \mathcal{N}(0, \sigma^2 I)$ denote isotropic Gaussian noise of scale $\sigma > 0$. Then, for a  parameter $\theta = \hat\theta + \xi$:
	\begin{equation}
		\mathbb{E}\!\left[\mathbb{E}_{\xi}[R(\theta)]\right] - R(\theta^\star)
		\;=\; \underbrace{\frac{1}{2K}\,\mathrm{tr}\!\left(\bar{H}^{-1}\Sigma_g\right)}_{\text{\normalfont alignment term}}
		\;+\; \underbrace{\frac{\sigma^2}{2}\,\mathrm{tr}(\bar{H})}_{\text{\normalfont curvature term}}
		\;+\; O(K^{-3/2}) \;+\; O(\sigma^3),
		\label{eq:decomposition}
	\end{equation}
	where the outer expectation is over the iid sampling of training distributions, $\bar{H} := \nabla^2 R(\theta^\star)$, and $\Sigma_g := \mathrm{Cov}_e(\nabla\mathcal{L}_e(\theta^\star))$.
\end{theorem}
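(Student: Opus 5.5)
The plan is to Taylor-expand $R$ around $\theta^\star$ to second order, handle the two sources of randomness in turn, and use the independence of $\xi$ from the sampled distributions to make the cross terms vanish. Write $\theta-\theta^\star = \Delta + \xi$ with $\Delta := \hat\theta-\theta^\star$. Since $\theta^\star$ minimizes $R$ in the open set $\Theta$, we have $\nabla R(\theta^\star)=\bar g = 0$. Assumption~\ref{ass:regularity} (bounded third derivatives) then gives
\begin{equation*}
R(\theta^\star+\Delta+\xi) - R(\theta^\star) = \tfrac12 (\Delta+\xi)^T \bar H (\Delta+\xi) + O(\|\Delta+\xi\|^3).
\end{equation*}
Take $\mathbb{E}_\xi$ first. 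Because $\xi$ is independent of $\Delta$ and has mean zero, the cross term $\Delta^T\bar H\xi$ drops out. The term $\tfrac12\mathbb{E}_\xi[\xi^T\bar H\xi]$ equals $\tfrac{\sigma^2}{2}\mathrm{tr}(\bar H)$, which is the curvature term. The cubic remainder is $O(\|\Delta\|^3) + O(\mathbb{E}\|\xi\|^3) = O(\|\Delta\|^3)+O(\sigma^3)$, with $d$ fixed. This leaves $\tfrac12\mathbb{E}[\Delta^T\bar H\Delta]$ to control.

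For the alignment term, I would run a standard M-estimator (Z-estimator) argument. The first-order condition $\nabla\hat R(\hat\theta)=0$, expanded around $\theta^\star$, reads
\begin{equation*}
0 = \hat g + \hat H \Delta + O(\|\Delta\|^2), \quad \hat g := \tfrac1K\textstyle\sum_k g_{e_k}(\theta^\star),\ \hat H := \tfrac1K\textstyle\sum_k H_{e_k}(\theta^\star).
\end{equation*}
Since $\bar H\succ0$ and $\hat H\to\bar H$, this gives $\Delta = -\bar H^{-1}\hat g + r$. Here $\hat g$ is a mean of iid zero-mean vectors, so $\mathrm{Cov}(\hat g)=\Sigma_g/K$ and $\|\hat g\| = O_p(K^{-1/2})$. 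The remainder $r$ collects $(\hat H-\bar H)\Delta$ and the quadratic term, so $r = O_p(K^{-1})$. Substituting, $\tfrac12\mathbb{E}[\hat g^T\bar H^{-1}\bar H\bar H^{-1}\hat g] = \tfrac{1}{2K}\mathrm{tr}(\bar H^{-1}\Sigma_g)$, which is the alignment term. The cross term $\mathbb{E}[\hat g^T r]$ and the cubic remainder $\mathbb{E}\|\Delta\|^3$ are both $O(K^{-3/2})$ by the moment orders. In $\mathbb{E}[\hat g^T r]$, the leading part $\mathbb{E}[\hat g^T(\hat H-\bar H)\bar H^{-1}\hat g]$ is a third-order cross moment and is in fact $O(K^{-2})$; either way it fits inside the stated $O(K^{-3/2})$.

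The main obstacle is making the $O(K^{-3/2})$ bound hold \emph{in expectation} rather than just in probability. This requires three things: existence and consistency of $\hat\theta$ in the neighborhood where the third derivatives are bounded, uniform integrability of $K\|\Delta\|^2$, and higher moments of $g_e$ and $H_e$. The assumption only gives second moments of $g_e$, which is not enough for these steps. My first approach would be to localize. Work on the event $A_K=\{\|\hat H-\bar H\|\le \lambda_{\min}(\bar H)/2,\ \|\hat g\|\le \epsilon\}$. On $A_K$, a contraction/implicit-function argument gives a unique local minimizer with $\|\Delta\|\lesssim\|\hat g\|$, and the expansion is valid there. Then bound the contribution of $A_K^c$ via Chebyshev/Markov, which requires $R$ bounded on the relevant region. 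I would state explicitly the mild extra moment conditions this needs: finite third moments of $g_e$ and second moments of $H_e$. These are the implicit regularity behind the stated rates, and with them the two pieces of the expectation combine into \eqref{eq:decomposition}.
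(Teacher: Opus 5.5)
Your argument is correct in its main line. For the alignment term it coincides with the paper's Step~2: the Z-estimator linearization $\hat\theta-\theta^\star=-\bar H^{-1}\hat g+O_P(K^{-1})$, followed by $\mathbb{E}[\hat g^T\bar H^{-1}\hat g]=\tfrac1K\mathrm{tr}(\bar H^{-1}\Sigma_g)$.

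Where you differ is the noise term. The paper expands $R$ a second time, around $\hat\theta$, and obtains $\tfrac{\sigma^2}{2}\mathrm{tr}(\nabla^2R(\hat\theta))$. It then replaces $\nabla^2R(\hat\theta)$ by $\bar H$ at a cost $O(\sigma^2K^{-1/2})$, which it calls absorbed without argument. (It is absorbed, by Young's inequality: $\sigma^2K^{-1/2}\le\tfrac23\sigma^3+\tfrac13K^{-3/2}$.) Your single expansion around $\theta^\star$ at $\Delta+\xi$, with $\Delta^T\bar H\xi$ removed by independence, produces $\mathrm{tr}(\bar H)$ at the correct base point directly. It also pushes every mixed $\Delta$--$\xi$ error into $\|\Delta+\xi\|^3\le 4(\|\Delta\|^3+\|\xi\|^3)$, so no Hessian-replacement step is needed.

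You are also more careful than the paper about expectations. The paper passes from $O_P(K^{-3/2})$ to $O(K^{-3/2})$ in expectation silently. It also derives $\|\hat H-\bar H\|=O_P(K^{-1/2})$ from the bounded third derivatives, which does not follow: those bounds say nothing about how $H_e(\theta^\star)$ varies across $e$. Both proofs therefore need more than the stated regularity assumption; yours says so explicitly.

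Two details of your localization need fixing.

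\textbf{Moments of $H_e$.} Second moments of $H_e$ do not suffice for the $A_K^c$ step. Chebyshev gives only $\Pr(\|\hat H-\bar H\|>\lambda_{\min}(\bar H)/2)=O(K^{-1})$. For finite-variance but heavy-tailed $H_e$ this probability genuinely decays slower than $K^{-3/2}$, so the bound $\sup(R-R(\theta^\star))\cdot\Pr(A_K^c)$ misses the target rate. Assume third moments of $H_e$ as well. Markov then gives $\Pr(A_K^c)=O(K^{-3/2})$, and H\"older gives $\mathbb{E}[\|\hat H-\bar H\|\,\|\hat g\|^2]\le(\mathbb{E}\|\hat H-\bar H\|^3)^{1/3}(\mathbb{E}\|\hat g\|^3)^{2/3}=O(K^{-3/2})$, which controls the cross term $\mathbb{E}[\hat g^Tr]$.

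\textbf{Tails of $\xi$.} $A_K$ localizes only the sampling randomness. Since $\xi$ is Gaussian, $\theta^\star+\Delta+\xi$ leaves the neighborhood where the third derivatives are bounded with positive probability, so the cubic-remainder expansion is not valid for all $\xi$. Split on $\{\|\xi\|\le r\}$; the cross term still vanishes by symmetry of the truncation set. Then use the same boundedness of $R$ you already invoke for $A_K^c$. The complement contributes $O(e^{-cr^2/\sigma^2})=o(\sigma^3)$. The paper's Step~3 has the same omission.
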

\end{neuraltheorem}
\paragraph{Proof sketch.} To prove Theorem~\ref{thm:decomposition}, we expand 
$R$ quadratically around $\theta^\star$, where the linear term vanishes since 
$\bar{g}(\theta^\star)=0$. We then express $\hat\theta - \theta^\star = -\bar 
H^{-1}\hat g + O_P(K^{-1})$ via the implicit-function argument standard in 
M-estimator asymptotics~\cite{van2000asymptotic}. Substituting into the 
quadratic and taking expectation over the iid distribution sampling, with 
$\mathrm{Cov}(\hat g) = \tfrac{1}{K}\Sigma_g$, yields the alignment term. 
Separately, expanding $R$ around $\hat\theta$ under random perturbations $\xi 
\sim \mathcal{N}(0, \sigma^2 I)$, e.g. due to optimization noise, produces the 
curvature term. Full proof in Appendix~\ref{app:thm1_proof}.

\begin{remark}
	\label{rem:why_both}
	The two terms exhibit a structural trade-off, i.e., $\bar{H}$ appears 
	inverted in the alignment term and non-inverted in the curvature term. As a 
	result, minimizing one term by reshaping the Hessian strictly inflates the 
	other. Considered in isolation, the alignment term implies that excess risk 
	is minimized by maximizing $\bar{H}$. The curvature term mitigates 
	this by explicitly penalizing excessive sharpness, ensuring a balanced 
	optimization objective.
\end{remark}

\subsection{Decoupling of Flatness and Alignment}
\label{sec:decoupling}

Theorem~\ref{thm:decomposition} gives an additive decomposition into a flatness-controlled term and an alignment-controlled term. The natural next question is whether the two terms are linked, i.e. an inequality of the form ``small $\mathrm{tr}(\bar H)$ implies small $\mathrm{tr}(\bar H^{-1}\Sigma_g)$'' that would allow a single criterion to bound both. The following counterexample shows that such link may not exist, even in simple settings.

\begin{counterexample}[Decoupling of flatness and gradient alignment]
	\label{thm:decoupling}
	Consider the family of multi-distribution learning problems with per-distribution losses
	\begin{equation}
		\mathcal{L}_e(\theta) = \tfrac{1}{2}\theta^T A\theta + b_e^T \theta, \qquad \mathbb{E}_e[b_e] = 0,
		\label{eq:quadratic_family}
	\end{equation}
	parameterized by a symmetric PD matrix $A \in \mathbb{R}^{d\times d}$ and a collection of linear coefficients $\{b_e\}$ satisfying $\mathbb{E}_e[b_e] = 0$. For every $M > 0$, there exist instances of Eq.~\eqref{eq:quadratic_family} such that:
	\begin{enumerate}[label=(\roman*)]
		\item \textbf{Flat but misaligned:} $\mathrm{tr}(\bar H) \le M^{-1}$ and $\mathrm{tr}(\bar H^{-1}\Sigma_g) \ge M$.
		\item \textbf{Aligned but sharp:} $\mathrm{tr}(\bar H^{-1}\Sigma_g) \le M^{-1}$ and $\mathrm{tr}(\bar H) \ge M$.
	\end{enumerate}
	In particular, neither $\mathrm{tr}(\bar H)$ nor $\mathrm{tr}(\bar H^{-1}\Sigma_g)$ can be bounded above by any function of the other alone.
\end{counterexample}

\paragraph{Proof sketch.} For the quadratic family, $\bar H = A$ depends only on $A$ and $\Sigma_g = \mathbb{E}_e[b_e b_e^T]$ depends only on $\{b_e\}$, so the two quantities can be specified independently. Setting $A = \lambda I$ and choosing $\{b_e\}$ supported on a single direction with prescribed variance yields both constructions, by an appropriate choice of $\lambda$. Full proof details in Appendix~\ref{app:thm2_proof}.

\begin{remark}
  \label{rem:decoupling}
  This decoupling result in the quadratic setting indicates that an algorithm targeting only one of the two quantities may increase the generalization error resulting from the other. Thus, both flatness and alignment terms need to be considered during optimization. 
\end{remark}

To make Theorem~\ref{thm:decomposition} and Counterexample~\ref{thm:decoupling} concrete, we construct in Appendix~\ref{app:motivating_example} a two-dimensional learning problem in which the support of $\Sigma_g$ coincides with the flat eigendirection of $\bar H$. The example realizes and exhibits two distinct failure modes along orthogonal eigendirections.

\section{Spectral-Aware Gradient-Aligned Exploration}
\label{sec:methods}

In this section, we describe SAGE, a single algorithm that targets the 
excess-risk decomposition terms of Theorem~\ref{thm:decomposition} via two 
separate principled heuristic mechanisms. The two components introduced below 
are tractable surrogates designed to probe each term and are introduced to avoid
2nd-order computational overhead.

\begin{figure}[htbp]
	\includegraphics[width=1\linewidth]{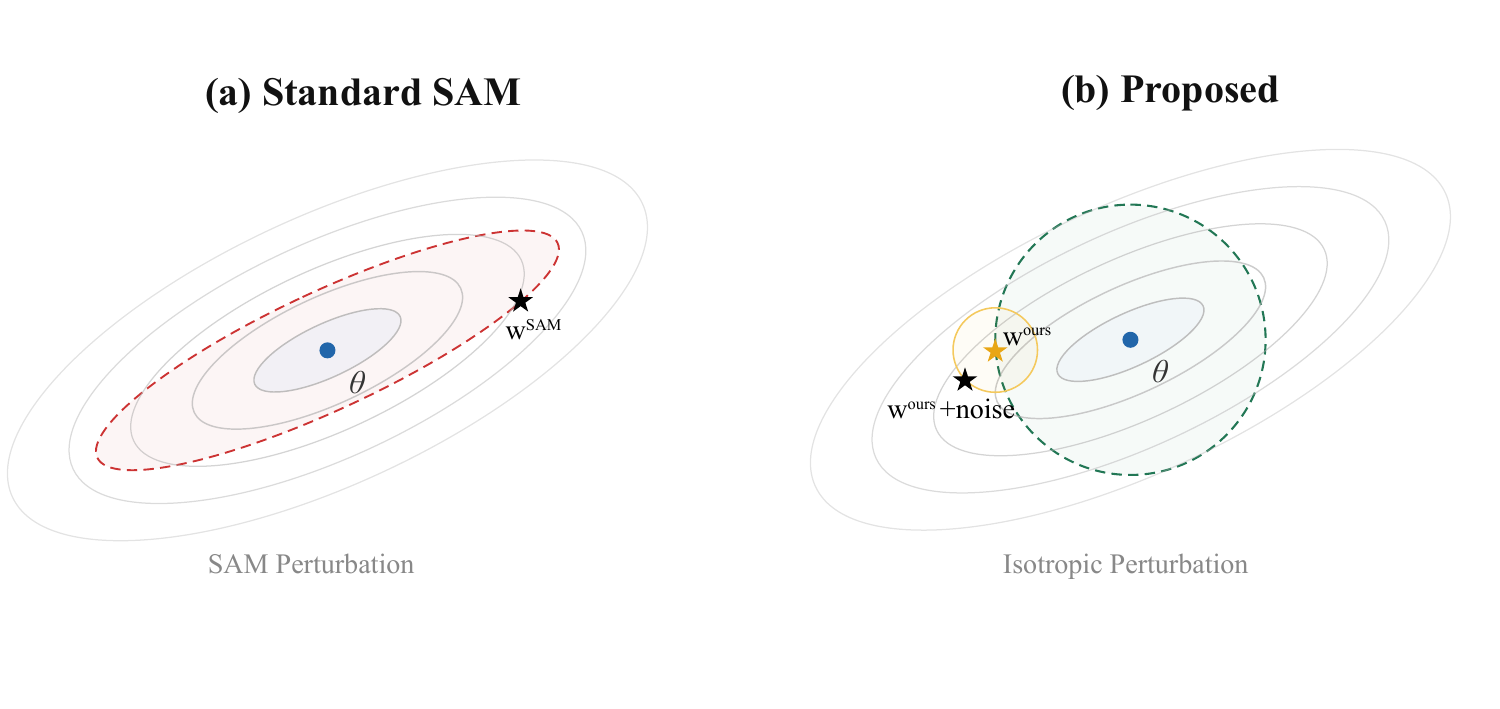}
	\vspace{-3em}
	\captionof{figure}{\textbf{Comparison of standard SAM and our proposed 
			method.} (a)~Standard SAM computes the perturbation as 
		$\epsilon_{\text{SAM}} = \rho\,\nabla_\theta \mathcal{L} / 
		\|\nabla_\theta \mathcal{L}\|_2$, determined solely by the gradient 
		direction (dashed ellipse), and the descent step lands at 
		$w^{\text{SAM}}$~($\bigstar$). 
		(b)~We replace the gradient with its orthogonal polar factor 
		($G \rightarrow UV^T$), setting all singular values to one and scaling by 
		the layer norm, yielding a perturbation 
		$\epsilon' = \rho \|W\|_F \cdot UV^T$ that probes all directions with 
		equal magnitude (dashed circle). After the descent step lands at 
		$w^{\text{ours}}$~(\textcolor{orange}{$\bigstar$}), isotropic noise 
		scaled by cross-distribution gradient disagreement perturbs the update 
		(orange circle), discouraging convergence to regions of gradient conflict.}
		\label{fig:fig1}
	\end{figure}

\subsection{Targeting the Curvature Term}
\label{sec:sage_spectral}

Motivated by the curvature term in Theorem~\ref{thm:decomposition} we follow 
the logic of SAM~\cite{foret2020sharpness} and SAM-like~\cite{zhuang2022surrogate, wang2023sharpness, kwon2021asam, 
li2023enhancing, zhang2024domaininspired, ban2025samo} methods, which have empirically proven to be effective in multi-distribution settings, but propose an 
alternative for the ascent perturbation step. Specifically, SAM-like methods derive the perturbation by approximating the inner maximization 
$\max_{\|\epsilon\|\le\rho}\mathcal{L}(\theta+\epsilon)$ with a first-order 
Taylor expansion, $\mathcal{L}(\theta+\epsilon) \approx \mathcal{L}(\theta) + 
\epsilon^T\nabla\mathcal{L}(\theta)$, whose maximizer over the $L_2$ ball 
admits the closed form $\epsilon_{\text{SAM}} = 
\rho\,\nabla\mathcal{L}/\|\nabla\mathcal{L}\|_2$. As apparent, the above 
perturbation direction ignores curvature information, is determined by the 
gradient vector alone, and is therefore probed by dominating gradients during training\footnote{Even though iterated SAM dynamics have been shown to implicitly minimize $\lambda_{\max}(\bar H)$~\cite{wen2023doessharpnessawareminimizationminimize}, it is a different functional of the Hessian from the trace $\mathrm{tr}(\bar H)$ identified by Theorem~\ref{thm:decomposition} as the relevant target.}. To address the above limitation, we propose replacing SAM's perturbation with an isotropic perturbation during the ascent step, inspired by preconditioning methods such as Shampoo~\cite{gupta2018shampoo} and Muon~\cite{jordan2024muon}. These methods have empirically shown to achieve effective preconditioning by orthogonalizing each $2D$ weight (i.e., parameter) matrix (e.g., layer) of a network independently. We follow the same heuristic to expand the exploration of additional perturbation directions of SAM. 

\paragraph{Spectral perturbation via orthogonalized gradients.}
For each weight matrix $W$ with gradient $G = \nabla_W \mathcal{L}$, we 
compute the orthogonal polar factor $Q = UV^T$, where $G = U\Sigma V^T$ is the 
SVD of $G$. This amounts to setting every singular value of the gradient to 
one, so that $Q = \sum_{i=1}^{r} u_i v_i^T$ retains the directional structure 
of $G$, i.e.\ the left and right singular vectors, while equalizing the 
magnitude across all singular directions. A perturbation along $Q$ therefore 
probes the loss surface with equal magnitude in every spectral direction of 
the gradient. Since computing the exact polar factor via SVD adds a computational burden of $\mathcal{O}(\min\{m,n\} \cdot mn)$ per weight matrix, we instead approximate $Q$ with the Newton--Schulz iteration~\cite{bernstein2024old, guo2006schur, bjorck1971iterative, kovarik1970some}, which converges cubically to the orthogonal factor using only matrix multiplications:
\begin{equation}
	X_0 = \frac{G}{\|G\|_F}, \qquad
	X_{k+1} = \tfrac{1}{2}\, X_k \bigl(3I - X_k^T X_k\bigr).
\end{equation}
After $T$ iterations ($T{=}5$ suffices in practice), $X_T \approx UV^T$. 
Because each step involves only two matrix products of the same shape as $G$, 
the overhead on modern GPU hardware is negligible. For bias vectors and other 
one-dimensional parameter groups, where matrix polar decomposition is not 
applicable, we fall back to the standard $L_2$-normalized perturbation 
$\epsilon = \rho\, g / \|g\|_2$, consistent with vanilla SAM.

\paragraph{Scale-adaptive radius.} Furthermore, to address the limitation from the well-documented interaction between SAM's fixed 
perturbation radius and the scale invariance of modern 
architectures~\cite{dinh2017sharp}, we scale the orthogonalized radius by the Frobenius norm of each weight matrix $\|W\|_F$, following the intuition of adaptive sharpness~\cite{kwon2021asam}. The final resulting spectral perturbation for weight matrix $W$ is therefore $\epsilon_W^{\text{spec}} = \rho \, \|W\|_F \, U V^T$,where $U, V$ are from the SVD of $G = \nabla_W \mathcal{L}$. Under this 
formulation, every singular direction receives a perturbation of identical 
magnitude $\rho \|W\|_F$, and the perturbation rescales with the current 
parameter norm, ensuring that the sharpness measure remains invariant to 
reparameterizations of the form $W \mapsto \alpha W$. More details in
Appendix~\ref{app:scale_invariance}.

\subsection{Targeting the Alignment term}
\label{sec:grad_alignment}

\begin{figure}[htbp]
	\centering
	\includegraphics[width=\textwidth]{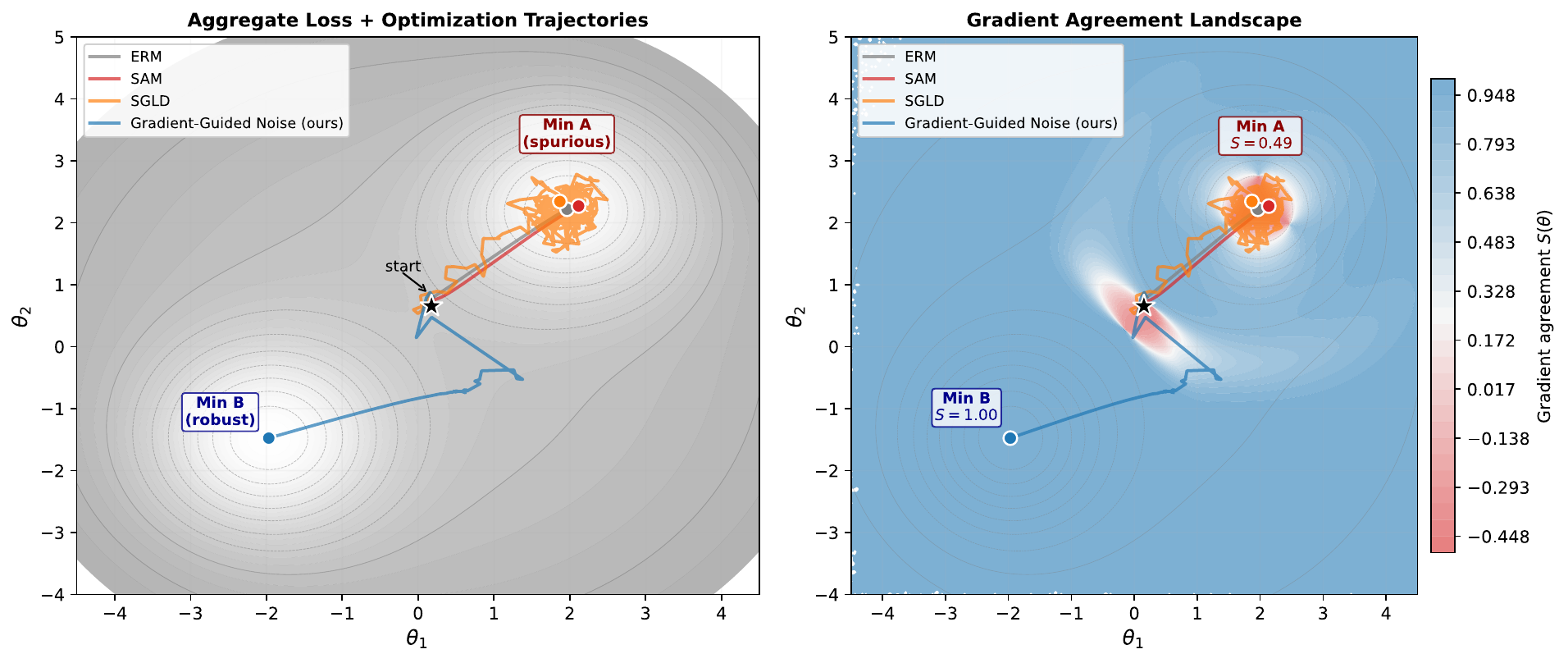}
	\caption{\textit{Left:} Aggregate loss contours with optimization trajectories from a shared starting point~($\star$). ERM (grey), SAM (red), 
		and SGLD (orange) converge to Minimum~A, whereas only gradient-aligned noise (blue, ours) escapes to Minimum~B. \textit{Right:} The same trajectories overlaid on the gradient agreement landscape $S(\theta)$, where \textcolor{cyan}{blue} and \textcolor{red}{red} indicate \textcolor{cyan}{high} and \textcolor{red}{low} cross-domain gradient agreement, respectively.}
	\label{fig:grad_alignment_toy}
\end{figure}

Motivated by the alignment term in Theorem~\ref{thm:decomposition}, we 
introduce a gradient-agreement-based regularizer at the descent step. Since the 
cross-distribution covariance $\Sigma_g$ is intractable to estimate and 
differentiate through at the scale of modern networks, we replace it with a 
tractable scalar proxy of cross-distribution gradient \emph{disagreement} and 
use it to modulate noise that biases the optimizer away from high-conflict 
regions, rather than minimizing $\mathrm{tr}(\bar H^{-1}\Sigma_g)$ directly.

In several settings, such as domain generalization~\cite{wang2022generalizing, zhou_domain_2023, 10233054}, multi-task learning~\cite{9392366}, the training signal decomposes into $K$ distinct distributions, each  
corresponding to a different domain, data source, or task. The overarching goal 
is to learn a model that performs well on all sub-tasks and unseen OOD data. 
Intuitively, \textit{directions of parameter update that are beneficial across 
all distributions are more likely to generalize to new ones}. This intuition 
has been formalized under gradient alignment~\cite{yu2020gradient, mansilla2021domain, shi2021gradient, le2024gradient}, where it has been shown that under certain assumptions~\cite{ballas2025gradient} it is a necessary 
condition for domain-invariance. Specifically, when gradients of different 
tasks or domains point in similar directions, the shared gradient direction 
captures features that are invariant across distributions. On the 
contrary, when they conflict the model is being pulled toward domain- or 
task-specific shortcuts.

Formally, as defined in Section~\ref{sec:setup}, given distribution loss functions $\mathcal{L}_{e_k}(\theta)$ for $K$ \textit{source} distributions $e_1, \dots, e_K \sim \mathcal{P}$, where in our case a distribution or environment $e_k$ may represent a domain, a task, or any other meaningful partition of the training data, we measure the agreement among gradients $g_{e_k} = \nabla_\theta \mathcal{L}_{e_k}(\theta)$ using the average pairwise cosine similarity, $S(\theta) = \frac{2}{K(K-1)} \sum_{i < j} \frac{\langle g_{e_i}, g_{e_j} \rangle}{\|g_{e_i}\| \, \|g_{e_j}\|}$.

When $S(\theta) \approx 1$, all distributions agree on the update direction, 
whereas when $S(\theta) \leq 0$, gradients conflict, indicating that the 
optimizer is navigating a region where domain- or objective-specific 
features dominate. Rather than modifying the gradient direction directly (as in 
PCGrad~\cite{yu2020gradient}) and to avoid further computational burden via 
second-order approximation techniques, we adopt a noise-injection strategy 
inspired by the recently proposed Gradient-Guided Annealing (GGA) 
framework~\cite{ballas2025gradient}. We inject isotropic Gaussian noise into 
the gradient at the descent step, with magnitude inversely proportional to the 
gradient agreement $\beta = \gamma \, (1 - S(\theta))$,
where $\gamma > 0$ is a hyperparameter controlling the maximum noise scale. The 
updated gradient then becomes $g' = \nabla_\theta \mathcal{L}(\theta + \epsilon) + \beta \, \xi, \quad \xi \sim \mathcal{N}(0, I)$.

The intuition behind this design is twofold. When gradients are well-aligned 
($S \approx 1$) the additive noise vanishes, whereas when gradients conflict 
($S \ll 1$) the noise discourages update directions that benefit some 
distributions at the expense of others. We illustrate the behavior between ERM, 
SAM~\cite{foret2020sharpness}, SGLD\footnote{SGLD or Stochastic Gradient 
Langevin Dynamics adds constant isotropic noise during 
optimization.}~\cite{welling2011bayesian, teh2016consistency}, and the proposed 
noise mechanism in Fig.~\ref{fig:grad_alignment_toy}.

\subsection{Spectral-Aware Gradient-Aligned Exploration}
\label{sec:sage}

We now describe the complete SAGE algorithm, which unifies the spectral 
perturbation and gradient-alignment into a single training procedure.
At each iteration, SAGE: (i) computes per-distribution gradients $g_{e_k}$ and 
aggregate gradient from a single forward pass, (ii) evaluates the gradient 
agreement $S(\theta)$ and sets the noise scale $\beta = \gamma(1 - S(\theta))$, 
(iii) constructs the spectral perturbation by orthogonalizing each weight 
matrix's gradient via the Newton-Schulz iteration and scaling by $\rho\|W\|_F$, 
(iv) performs a forward-backward pass at the perturbed point $\theta + 
\epsilon$, and (v) restores the clean weights, adds isotropic gradient 
alignment-driven Gaussian noise $\beta\ \mathcal{N}(0, I)$ to the 
adversarial gradient, and finally applies the base optimizer update step. 
A detailed summary in pseudocode of the SAGE algorithm is provided in Appendix~\ref{app:sage_algorithm}, whereas a conceptual illustration is provided in Fig. \ref{fig:fig1}.

\section{Related Work}
\label{sec:related_work}

In this section we situate SAGE within the literature of DG and MTL, and provide an overview of the most related works. An extended discussion regarding the landscape of the two fields is provided in Appendix~\ref{app:related_work}.

\paragraph{Sharpness-aware minimization.}
The observation that flat minima generalize better than sharp
ones~\cite{keskar2016large} motivated Sharpness-Aware Minimization
(SAM)~\cite{foret2020sharpness}, which performs a gradient ascent step followed
by a descent step so that the optimizer preferentially converges to flat regions
of the loss landscape. Several refinements have since been proposed, such as
GSAM~\cite{zhuang2022surrogate} which introduces a surrogate gap objective,
while ASAM~\cite{kwon2021asam} adapts perturbation magnitudes to the parameter 
scale, and SAGM~\cite{wang2023sharpness} additionally encourages the clean-loss 
and perturbed-loss gradients to remain aligned, jointly minimizing sharpness and
the surrogate gap. In the DG setting, DISAM~\cite{zhang2024domaininspired}
showed that SAM's perturbation is biased toward the domain with the largest
gradient and proposed a domain-loss variance constraint to counteract this
effect, while SWAD~\cite{cha2021swad} achieves flatness passively by averaging
model weights along the training trajectory. In MTL,
SAMO~\cite{ban2025samo} combines global and per-task perturbation information
through a zeroth-order estimator. An important note is the majority of all 
sharpness-aware algorithms incorporate the perturbation method of the initially proposed SAM~\cite{foret2020sharpness}.

\paragraph{Gradient-alignment methods.}
A complementary line of work uses agreement among per-environment or per-task
gradients as an indicator of domain invariance. In DG,
Fish~\cite{shi2021gradient} approximately maximizes the inner product of
per-domain gradients, while Mansilla et al.~\cite{mansilla2021domain} adapt 
gradient surgery~\cite{yu2020gradient} to mute gradient dimensions where 
domains disagree in sign. More recently, GGA~\cite{ballas2025gradient} searches 
for parameter configurations with well-aligned domain gradients via a 
simulated-annealing-inspired procedure. In MTL, PCGrad~\cite{yu2020gradient} 
projects conflicting task gradients, CAGrad~\cite{liu2021conflict} maximizes 
worst-case per-task improvement, and Nash-MTL~\cite{navon2022multi} frames the 
problem as a bargaining game. FAMO~\cite{liu2023famo} circumvents the 
$\mathcal{O}(K)$ per-iteration cost of gradient manipulation by working in 
log-loss space with amortized weight updates. However, these methods enforce gradient alignment without considering the curvature of the resulting solution.

\paragraph{Positioning of SAGE.}
The most closely related prior work is SAGM~\cite{wang2023sharpness}, which
also augments SAM with a gradient-matching term. However, SAGM's matching
aligns the clean-loss and perturbed-loss gradients, both computed on the
\emph{aggregate} data, and does not involve per-environment gradient
information. SAGE differs in two respects: (i) its spectral perturbation
replaces SAM's gradient-scaled ascent with the polar factor of each layer's
gradient, computed via Newton--Schulz iteration and scaled by the layer's
Frobenius norm, yielding a reparameterization-invariant sharpness probe; and
(ii) its noise injection at the descent step is scaled by the degree of
cross-environment gradient conflict, targeting the alignment term
$\mathrm{tr}(\bar H^{-1}\Sigma_g)$ identified in
Theorem~\ref{thm:decomposition}. Unlike methods that modify the gradient
direction (PCGrad, Fish) or solve auxiliary optimization problems (Nash-MTL,
FAMO), SAGE's noise injection is lightweight and requires no per-environment
gradient storage beyond what is needed for computing pairwise cosine
similarities. SAGE can also operate as a drop-in replacement for the optimizer's ascent and descent steps, making it compatible with existing gradient manipulation methods in MTL (as demonstrated in Table~\ref{tab:combined}) and with the standard DomainBed protocol in DG (Table~\ref{table:baseline-results}).

\section{Experiments}
\label{sec:experiments}

\subsection{Experimental setting}
\label{sec:experimental_settings}

\paragraph{Datasets and Protocol.} Our method is evaluated in two different 
experimental settings, namely Domain Generalization (DG)~\cite{zhou_domain_2023} and Multi-Task learning~\cite{9392366}. For DG, we 
follow the protocol of the widely adopted and challenging DomainBed~\cite{gulrajani2020domainbed} benchmark, and conduct experiments on 
five image classification datasets. Specifically, we follow the 
leave-one-domain-out evaluation protocol for PACS~\cite{li2017deeper}, 
VLCS~\cite{fang2013unbiased}, OfficeHome~\cite{venkateswara2017deep}, 
TerraIncognita~\cite{beery2018recognition}, and 
DomainNet~\cite{peng2019moment}, and report the average top-1 accuracy over 3 
runs based on training-domain split validation. For the MTL setting, we follow 
the standard protocol in recent literature~\cite{ban2025samo, liu2023famo, 
navon2022multi} and evaluate SAGE on Cityscapes~\cite{cordts2016cityscapes} (semantic segmentation and depth estimation), and NYU-v2~\cite{silberman2012indoor} (semantic segmentation, depth estimation, and surface normal prediction).

\paragraph{Implementation Details.} For the DG experiments, we follow recent literature~\cite{liexploring, zhang2024domaininspired} and finetune a 
CLIP~\cite{radford2021learning} pretrained ViT-B/16~\cite{dosovitskiy2021an} 
model. For the MTL setting, we follow~\cite{10.5555/3692070.3692179, 
liu2021conflict, liu2023famo, navon2022multi} and employ MTAN~\cite{liu2019end} 
as the shared backbone, with task-specific attention modules built on top
of SegNet~\cite{badrinarayanan2017segnet}. Dataset and training hyperparameter details are provided in Appendix~\ref{app:experiments}.

\subsection{Main results}
\label{sec:experimental_results}

Table \ref{table:baseline-results} presents the results on the DomainBed benchmark. SAGE achieves an average accuracy of 78.9\%, establishing a new state-of-the-art in this particular setting. Notably, SAGE even surpasses 
recent sharpness-aware adaptations tailored for DG, including SAGM (76.4\%), 
DISAM (76.5\%), and the BOA regularizer (78.1\%), which is applied to SAM-pretrained backbones. Extending beyond domain shift, we demonstrate 
SAGE's versatility on the NYU-v2 and Cityscapes MTL benchmarks (Table 
\ref{tab:combined}). The results illustrate that integrating SAGE
improves the performance of underlying base MTL solvers. For instance, augmenting FairGrad~\cite{10.5555/3692070.3692179} with SAGE (SAGE-FairGrad) 
yields improvements, notably reducing the per-task performance drop against the single-task (STL) ($\Delta m\%\downarrow$) from $-4.96$ to $-5.63$ on NYU-v2, and from $3.90$ to $2.63$ on Cityscapes. Similar relative gains are observed when SAGE is applied to Linear Scalarization (LS) and MGDA~\cite{sener2018multi}. 

\begin{table*}[h]
	\centering
	\small
	\caption{{\bf Comparison of \textbf{SAGE} on DomainBed}. The top out-of-domain accuracies on five domain generalization benchmarks averaged over three trials, are presented.
	Results of previously proposed methods are from \cite{liexploring}.}
	\label{table:baseline-results}
	\begin{adjustbox}{width=0.7\linewidth}
	\begin{tabular}{l|cccc|c|c}
		\toprule
		Algorithm & PACS & VLCS & OfficeHome & {TerraInc} & {DomainNet} & {Avg.}  \\
		
		\midrule
		
		ERM \cite{vapnik1998statistical}   & 
		95.9\scriptsize            & 
		81.9\scriptsize             & 
		84.1\scriptsize             & 
		56.1\scriptsize             & 
		59.5\scriptsize             & 75.5  \\
		
		\midrule
		
		IRM \cite{arjovsky2019irm} &
		96.1 &
		82.9 &
		83.2 & 
		56.7 &
		59.1 &
		75.6 \\
		
		DANN \cite{ganin2016domain} & 
		96.3 &
		81.8 &
		83.0 & 
		56.0 &
		58.4 &
		75.1 \\
		
		CDANN \cite{li2018cdann}    & 
		96.5 &
		82.4 & 
		82.9 &
		55.6 &
		58.4 &
		75.2 \\
		
		MMD \cite{li2018mmd}  &
		95.8 &
		82.3 &
		83.6 &
		57.4 & 
		59.9 &
		75.8 \\
		
		RSC \cite{huang2020rsc}               & 
		96.5         & 
		82.2                              & 
		83.2           & 
		58.2            & 
		59.0          & 
		75.8 \\    
	
		CORAL \cite{sun2016coral} & 
		96.4 &
		82.6 &
		83.8 &
		57.5 &
		59.8 &
		76.0 \\

		IIB \cite{li2022invariant} & 
		96.5 &
		82.3 & 
		84.2 &
		58.2 &
		58.6 &
		76.0 \\
		
		FISH \cite{shi2021gradient} & 
		96.9 &
		82.7 &
		85.0 & 
		58.0 &
		59.1 &
		75.0 \\
		
		SAM \cite{foret2020sharpness}  & 
		96.6         & 
		82.9         & 
		85.4        & 
		56.2         & 
		59.8               
		&  76.2 \\   

		GSAM \cite{zhuang2022surrogate}  & 
		96.6        & 
		82.9          & 
		{85.6}           & 
		55.4           & 
		59.8              &   
		76.1 \\
		
		GAM \cite{zhang2023gradient}     &
		{96.4}      & 
		{83.6}       & 
		{85.5}       & 
		{55.3}       & 
		59.5        &  
		76.1 \\

		SAGM \cite{wang2023sharpness}       & 
		{96.8}      & 
		{82.8}       & 
		{85.2}       & 
		{58.0}       & 
		59.5		&
		76.4 \\
		
		GGA\cite{ballas2025gradient}       & 
		 \underline{97.2}      & 
		 84.1  & 
		 85.7  & 
		 58.8  & 
	     59.8  &  
		 77.1 \\
		
		DISAM \cite{zhang2024domaininspired} &
		97.1 &
		82.7 &
		85.4 &
		57.3 &
		{59.8} &
		76.5 \\
		
		\midrule
		
		BOA (SAM pretrained) \cite{liexploring} &
		\textbf{97.4} & 
		\textbf{86.5} &
		\underline{86.0} & 
		\underline{60.3} &
		\underline{60.2} &
		\underline{78.1} \\
		
		\midrule
		
		\textbf{SAGE} (Ours)               & 
		\textbf{97.4}     & 
		\underline{85.5}  & 
		\textbf{87.1}            &  
		\textbf{61.1}            & 
		\textbf{63.4}            & 
		\textbf{78.9}  \\
		
		\bottomrule
		
	\end{tabular}
\end{adjustbox}
\end{table*}



\begin{table*}
  \centering
  	\caption{Results on NYU-v2 (3-task) and Cityscapes (2-task) datasets. The best results are highlighted in \textbf{bold} and second-best are \underline{underlined}. \textcolor{teal}{Green} plus signs (\textcolor{teal}{+}) and \textcolor{red}{Red} minuses (\textcolor{red}{$-$}) indicate whether the addition of SAGE on top of base solvers improves or hinders training, respectively.}
	\begin{adjustbox}{width=\linewidth}
		\begin{tabular}{l|llllllllll||lllll}
			\toprule
			\multicolumn{11}{c}{\textbf{~~~~~~~~~~~~~~~~~~~~~~~~NYU-v2}} & \multicolumn{5}{c}{\textbf{Cityscapes}} \\
			\multirow{3}*{Method} & \multicolumn{2}{c}{Segmentation} & \multicolumn{2}{c}{Depth} & \multicolumn{5}{c}{Surface Normal} & \multirow{3}*{$\Delta m\%\downarrow$} & \multicolumn{2}{c}{Segmentation} & \multicolumn{2}{c}{Depth} & \multirow{3}*{$\Delta m\%\downarrow$} \\
			\cmidrule(lr){2-3}\cmidrule(lr){4-5}\cmidrule(lr){6-10}\cmidrule(lr){12-13}\cmidrule(lr){14-15}
			& \multirow{2}*{mIoU $\uparrow$} & \multirow{2}*{Pix Acc $\uparrow$} & \multirow{2}*{Abs Err $\downarrow$} & \multirow{2}*{Rel Err $\downarrow$} & \multicolumn{2}{c}{Angle Distance $\downarrow$} & \multicolumn{3}{c}{Within $t^\circ$ $\uparrow$} & & \multirow{2}*{mIoU $\uparrow$} & \multirow{2}*{Pix Acc $\uparrow$} & \multirow{2}*{Abs Err $\downarrow$} & \multirow{2}*{Rel Err $\downarrow$} & \\
			\cmidrule(lr){6-7}\cmidrule(lr){8-10}
			& & & & & Mean & Median & $<$11.25 & $<$22.5 & $<$30 & & & & & & \\
			\midrule
			STL & 38.30 & 63.76 & 0.6754 & 0.2780 & 25.01 & 19.21 & 30.14 & 57.20 & 69.15 & & 74.01 & 93.16 & 0.0125 & 27.77 & \\
			\midrule
			LS & 39.29 & 65.33 & 0.5493 & 0.2263 & 28.15 & 23.96 & 22.09 & 47.50 & 61.08 & 5.59 & 75.18 & 93.49 & 0.0155 & 46.77 & 22.60 \\
			SI & 38.45 & 64.27 & 0.5354 & 0.2201 & 27.60 & 23.37 & 22.53 & 48.57 & 62.32 & 4.39 & 70.95 & 91.73 & 0.0161 & 33.83 & 14.11 \\
			RLW~\cite{lin2021reasonable} & 37.17 & 63.77 & 0.5759 & 0.2410 & 28.27 & 24.18 & 22.26 & 47.05 & 60.62 & 7.78 & 74.57 & 93.41 & 0.0158 & 47.79 & 24.38 \\
			DWA~\cite{liu2019end} & 39.11 & 65.31 & 0.5510 & 0.2285 & 27.61 & 23.18 & 24.17 & 50.18 & 62.39 & 3.57 & 75.24 & 93.52 & 0.0160 & 44.37 & 21.45 \\
			UW~\cite{kendall2018multi} & 36.87 & 63.17 & 0.5446 & 0.2260 & 27.04 & 22.61 & 23.54 & 49.05 & 63.65 & 4.05 & 72.02 & 92.85 & 0.0140 & 30.13 & 5.89 \\
			MGDA~\cite{sener2018multi} & 30.47 & 59.90 & 0.6070 & 0.2555 & 24.88 & 19.45 & 29.18 & 56.88 & 69.36 & 1.38 & 68.84 & 91.54 & 0.0309 & 33.50 & 44.14 \\
			PCGrad~\cite{yu2020gradient} & 38.06 & 64.64 & 0.5550 & 0.2325 & 27.41 & 22.80 & 23.86 & 49.83 & 63.14 & 3.97 & 75.13 & 93.48 & 0.0154 & 42.07 & 18.29 \\
			GradDrop~\cite{chen2020just} & 39.39 & 65.12 & 0.5455 & 0.2279 & 27.48 & 22.96 & 23.38 & 49.44 & 62.87 & 3.58 & 75.27 & 93.53 & 0.0157 & 47.54 & 23.73 \\
			IMTL-G~\cite{liu2021towards} & 39.35 & 65.60 & 0.5426 & 0.2256 & 26.02 & 21.19 & 26.20 & 53.13 & 66.24 & -0.76 & 75.33 & 93.49 & 0.0135 & 38.41 & 11.10 \\
			CAGrad~\cite{liu2021conflict} & 39.79 & 65.49 & 0.5486 & 0.2250 & 26.31 & 21.58 & 25.61 & 52.36 & 65.58 & 0.20 & 75.16 & 93.48 & 0.0141 & 37.60 & 11.64 \\
			MoCo~\cite{fernando2023mitigating} & 40.30 & \textbf{66.07} & 0.5575 & 0.2135 & 26.67 & 21.83 & 25.61 & 51.78 & 64.85 & 0.16 & 75.42 & 93.55 & 0.0149 & 34.19 & 9.90 \\
			Nash-MTL~\cite{navon2022multi} & 40.13 & 65.93 & \textbf{0.5261} & 0.2171 & 25.26 & 20.08 & 28.40 & 55.47 & 68.15 & -4.04 & 75.41 & 93.66 & 0.0129 & 35.02 & 6.82 \\
			FAMO~\cite{liu2023famo} & 38.88 & 64.90 & 0.5474 & 0.2194 & 25.06 & 19.57 & 29.21 & 56.61 & 68.98 & -4.10 & 74.54 & 93.29 & 0.0145 & 32.59 & 8.13 \\
			FairGrad~\cite{10.5555/3692070.3692179} & 38.80 & 65.29 & 0.5572 & 0.2322 & 24.55 & 18.97 & 30.50 & 57.94 & 70.14 & \underline{-4.96} & 74.10 & 93.03 & 0.0135 & 29.92 & 3.90 \\
			F-MTL~\cite{phan2025beyond} & \textbf{40.42} & 65.61 & 0.5389 & \underline{0.2121} & 25.03 & 19.75 & 28.90 & 56.19 & 68.72 & -4.77 & 76.63 & 93.76 & 0.0124 & 31.17 & \underline{1.87} \\
			\midrule

			SAMO-LS~\cite{ban2025samo} & 39.59 & \underline{65.72} & 0.5514 & 0.2246 & 27.38 & 22.78 & 24.09 & 49.82 & 63.01 & 2.88 & \textbf{76.46} & \textbf{93.76} & 0.0147 & 39.85 & 14.30 \\
			SAMO-MGDA~\cite{ban2025samo} & 29.85 & 60.83 & 0.6111 & 0.2388 & \textbf{24.11} & \underline{18.18} & \underline{32.16} & \underline{59.59} & \underline{71.15} & -2.19 & 73.28 & 93.26 & 0.0133 & 30.57 & 4.30 \\
			SAMO-FairGrad~\cite{ban2025samo} & 39.05 & 65.06 & 0.5359 & 0.2137 & 24.43 & 18.79 & 30.98 & 58.35 & 70.42 & \textbf{-6.55} & 74.37 & 93.14 & \underline{0.0129} & \textbf{26.30} & \textbf{-0.62} \\
			
			\midrule
			
			SAGE-LS & 39.67~\textbf{\textcolor{teal}{+}} & 65.03~\textbf{\textcolor{red}{$-$}} & 
			0.5331~\textbf{\textcolor{teal}{+}} & 
			0.2180~\textbf{\textcolor{teal}{+}} & 
			27.40~\textbf{\textcolor{teal}{+}} & 
			23.06~\textbf{\textcolor{teal}{+}} & 
			23.62~\textbf{\textcolor{teal}{+}} & 
			49.28~\textbf{\textcolor{teal}{+}} & 
			62.87~\textbf{\textcolor{teal}{+}} & 
			3.81~\textbf{\textcolor{teal}{+}} & 
			75.69~\textbf{\textcolor{teal}{+}} & 
			93.57~\textbf{\textcolor{teal}{+}} & 
			0.0146~\textbf{\textcolor{teal}{+}} & 
			42.84~\textbf{\textcolor{teal}{+}} & 
			19.92~\textbf{\textcolor{teal}{+}} \\
			
			SAGE-MGDA & 33.24~\textbf{\textcolor{teal}{+}} & 
			61.88~\textbf{\textcolor{teal}{+}} & 
			0.5707~\textbf{\textcolor{teal}{+}} & 
			0.2266~\textbf{\textcolor{teal}{+}} & 
			24.55~\textbf{\textcolor{teal}{+}} & 
			18.99~\textbf{\textcolor{teal}{+}} & 
			30.15~\textbf{\textcolor{teal}{+}} & 
			57.91~\textbf{\textcolor{teal}{+}} & 
			70.07~\textbf{\textcolor{teal}{+}} & 
			-1.77~\textbf{\textcolor{teal}{+}} & 
			73.46~\textbf{\textcolor{teal}{+}} & 
			93.28~\textbf{\textcolor{teal}{+}} & 
			\textbf{0.0123}~\textbf{\textcolor{teal}{+}} & 
			29.59~\textbf{\textcolor{teal}{+}} & 
			3.82~\textbf{\textcolor{teal}{+}} \\
			
			SAGE-FairGrad & 
			\underline{40.32}~\textbf{\textcolor{teal}{+}} & 
			65.34~\textbf{\textcolor{teal}{+}} & 
			\underline{0.5301}~\textbf{\textcolor{teal}{+}} & 
			\textbf{0.2082}~\textbf{\textcolor{teal}{+}} & 
			24.53~\textbf{\textcolor{teal}{+}} & 
			18.97~\textbf{{$\pm$}} & 
			30.28~\textbf{\textcolor{red}{$-$}} & 
			57.44~\textbf{\textcolor{red}{$-$}} & 
			69.66~\textbf{\textcolor{red}{$-$}} & 
			\underline{-5.63}~\textbf{\textcolor{teal}{+}}& 
			75.40~\textbf{\textcolor{teal}{+}} & 
			93.63~\textbf{\textcolor{teal}{+}} & 
			\underline{0.0129}~\textbf{\textcolor{teal}{+}} & 
			29.33~\textbf{\textcolor{teal}{+}} & 
			{2.63}~\textbf{\textcolor{teal}{+}} \\
			
			SAGE-SAMO-LS & 
			40.15~\textbf{\textcolor{teal}{+}} & 
			65.59~\textbf{\textcolor{red}{$-$}} & 
			0.5508~\textbf{\textcolor{teal}{+}} & 
			0.2251~\textbf{\textcolor{red}{$-$}} & 
			27.40~\textbf{\textcolor{red}{$-$}} & 
			22.76~\textbf{\textcolor{teal}{+}} & 
			23.34~\textbf{\textcolor{red}{$-$}} & 
			49.86~\textbf{\textcolor{teal}{+}} & 
			63.56~\textbf{\textcolor{teal}{+}} & 
			3.82~\textbf{\textcolor{red}{$-$}} & 
			\underline{76.07}~\textbf{\textcolor{red}{$-$}} & 
			\underline{93.73}~\textbf{\textcolor{red}{$-$}} & 
			0.0149~\textbf{\textcolor{red}{$-$}} & 
			42.15~\textbf{\textcolor{red}{$-$}} & 
			18.71~\textbf{\textcolor{red}{$-$}} \\
			
			SAGE-SAMO-MGDA & 
			32.32~\textbf{\textcolor{teal}{+}} & 
			62.78~\textbf{\textcolor{teal}{+}} & 
			0.5651~\textbf{\textcolor{teal}{+}} & 
			0.2201~\textbf{\textcolor{teal}{+}} & 
			\underline{24.23}~\textbf{\textcolor{red}{$-$}} & 
			\textbf{18.10}~\textbf{\textcolor{teal}{+}} & 
			\textbf{32.41}~\textbf{\textcolor{teal}{+}} & 
			\textbf{59.68}~\textbf{\textcolor{teal}{+}} & 
			\textbf{71.16}~\textbf{\textcolor{teal}{+}} & 
			-3.66~\textbf{\textcolor{teal}{+}} & 
			73.92~\textbf{\textcolor{teal}{+}} & 
			93.20~\textbf{\textcolor{red}{$-$}}& 
			0.0133~\textbf{$\pm$} & 
			32.46~\textbf{\textcolor{red}{$-$}} &
			9.85~\textbf{\textcolor{red}{$-$}} \\
			
			SAGE-SAMO-FairGrad & 
			39.76~\textbf{\textcolor{teal}{+}} & 
			65.80~\textbf{\textcolor{teal}{+}} & 
			0.5413~\textbf{\textcolor{red}{$-$}} & 
			0.2164~\textbf{\textcolor{red}{$-$}} & 
			24.34~\textbf{\textcolor{teal}{+}} & 
			18.69~\textbf{\textcolor{teal}{+}} & 
			31.01~\textbf{\textcolor{teal}{+}} & 
			58.38~\textbf{\textcolor{teal}{+}} & 
			70.46~\textbf{\textcolor{teal}{+}} & 
			\underline{-6.19}~\textbf{\textcolor{red}{$-$}} & 
			75.08~\textbf{\textcolor{teal}{+}}  & 
			93.52~\textbf{\textcolor{teal}{+}}  & 
			0.0131~\textbf{\textcolor{red}{$-$}} & 
			32.02~\textbf{\textcolor{red}{$-$}} & 
			4.88~\textbf{\textcolor{red}{$-$}} \\
			
			\bottomrule
		\end{tabular}
	\end{adjustbox}
	\vspace{0.1em}
	\label{tab:combined}
\end{table*}


%
%

\paragraph{Summary.} Across the two benchmark settings, SAGE establishes a new state-of-the-art on the DomainBed DG benchmark (78.9\% average) and acts as a general-purpose improvement for base MTL solvers. On MTL specifically, specialized methods such as SAMO-FairGrad remain the strongest on absolute $\Delta m$\%, while SAGE provides consistent gains over the underlying solvers (LS, MGDA, FairGrad) without requiring task-routing or solver-specific design. In combination, the DG experiments demonstrate that targeting both curvature 
and gradient alignment can yield state-of-the-art results, while the MTL results
demonstrate that the same mechanisms can be added to existing gradient-balancing methods and yield improved results.

\subsection{Ablations}
Figure~\ref{fig:ablation-comparison} (left) reports OfficeHome accuracy over 
selections of the perturbation radius $\rho$ and the noise scale $\gamma$. 
Performance is stable across selection, with the exception of the selection of 
either relatively low or high values. In the same, figure on the right we compare the performance of SAGE when removing each of its components and against the Muon~\cite{jordan2024muon} optimizer. As apparent from the results, 
each component of SAGE, i.e. orthogonalization via Newton-Schulz, weight 
scaling in each layer and the additive noise, contributes positively during 
training. Further ablations, such as cosine similarity plots between the 
average training-domain and unseen target domain gradients are provided in 
Appendix~\ref{app:ablations}.

\begin{figure}[htbp]
	\centering
	
	\begin{minipage}[t]{0.39\linewidth}
		\vspace{0pt}
		\centering
		\includegraphics[width=\linewidth]{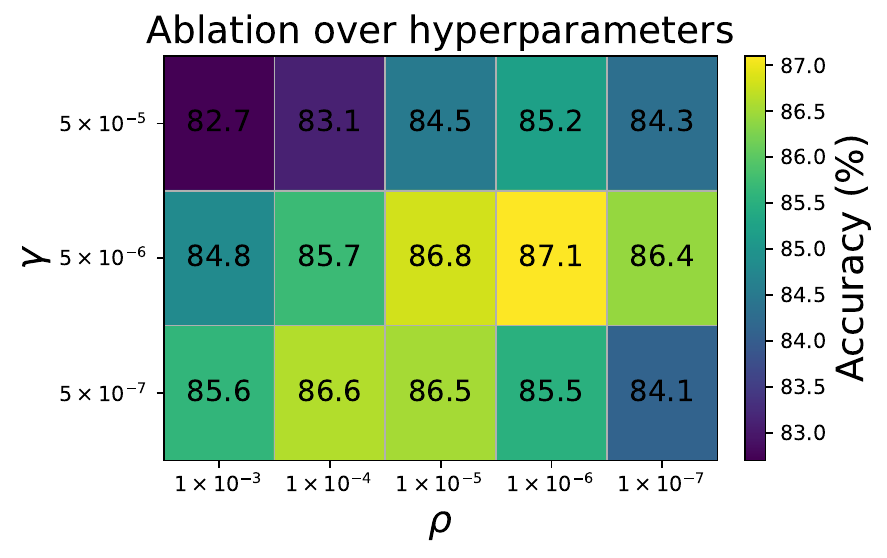}
	\end{minipage}
	\hfill
	\begin{minipage}[t]{0.59\linewidth}
		\vspace{0pt}
		\centering
		\label{tab:ablation_sage}
		
		\begin{adjustbox}{max width=\linewidth}
			\begin{tabular}{c|c}
				\toprule
				Method & Accuracy\\
				\midrule
				No Newton-Schulz & 86.3 \\
				No Scaling & 86.6 \\
				No Noise & 86.8 \\
				Muon~\cite{jordan2024muon} & 85.5 \\
				\midrule
				SAGE & 87.1 \\
				\bottomrule
			\end{tabular}
		\end{adjustbox}
	\end{minipage}
	
	\caption{\textit{Left}: Ablation results for the $\rho$ and $\gamma$ hyperparameters and \textit{right}: Ablations regarding each component of SAGE, on the OfficeHome dataset.}
	\label{fig:ablation-comparison}
\end{figure}

\section{Conclusions, Limitations \& Future Work}
\label{sec:conclusion}

In this paper, we presented an excess-risk decomposition for multi-distribution 
learning that exposes a structural trade-off between the curvature of the loss 
surface and the alignment of per-distribution gradients, along with a 
separation result showing that neither quantity bounds the other. Motivated by this decomposition, we proposed SAGE, whose two components target the two terms via: a spectral perturbation that probes all singular directions of each 
layer's gradient with comparable magnitude, and a gradient-disagreement-scaled 
noise injection that biases the optimizer away from high-conflict regions. Across five DG and two MTL benchmarks, SAGE compares favorably with both 
sharpness-aware and gradient-alignment baselines.

One limitation of SAGE is that it incurs additional per-step cost relative to
vanilla SAM, as Newton–Schulz orthogonalization adds matrix multiplications on
every weight matrix, and the noise mechanism requires per-distribution gradient
computation (Appendix~\ref{app:complexity}). Additionally, the empirical gains
are not uniformly state-of-the-art across all benchmarks. In the MTL setting in
particular, when paired with base solvers SAGE improves performance in almost
all cases, but does not dominate every metric compared to methods that have been
specifically developed for MTL. More fundamentally, the connection between
SAGE's mechanisms and the terms of Theorem~\ref{thm:decomposition} is
motivational rather than formal: we have not shown that either component
provably reduces a tractable surrogate of its target term, and
Counterexample~\ref{thm:decoupling} is an argument in the quadratic family
rather than a guarantee for the non-convex deep-learning regime. These
limitations point to natural directions for future work. For example, methods
that directly estimate or bound the two terms of Theorem~\ref{thm:decomposition}
could replace the spectral perturbation with a target that more closely matches
the curvature term, while estimators of $\Sigma_g$ could be coupled with
regularizers that penalize the alignment term more directly. Finally, whether
the curvature–alignment trade-off identified also applies to other settings such
as continual learning or federated optimization, where multi-distribution
structure arises from different generative processes than DG and MTL, is an
interesting open question which we aim to research in future work.

%


\bibliography{main}

@String(CVPR= {IEEE Conf. Comput. Vis. Pattern Recog.})

@String(ECCV= {Eur. Conf. Comput. Vis.})

@String(ICASSP=	{ICASSP})

@String(AAAI = {AAAI})

@String(CVPR  = {CVPR})

@String(ECCV  = {ECCV})

@article{zhou_domain_2023,
	title = {Domain {Generalization}: {A} {Survey}},
	volume = {45},
	issn = {1939-3539},
	shorttitle = {Domain {Generalization}},
	doi = {10.1109/TPAMI.2022.3195549},
	number = {4},
	journal = {IEEE Transactions on Pattern Analysis and Machine Intelligence},
	author = {Zhou, Kaiyang and Liu, Ziwei and Qiao, Yu and Xiang, Tao and Loy, Chen Change},
	month = apr,
	year = {2023},
	note = {Conference Name: IEEE Transactions on Pattern Analysis and Machine Intelligence},
	pages = {4396--4415},
}

@article{yu2020gradient,
	title={Gradient surgery for multi-task learning},
	author={Yu, Tianhe and Kumar, Saurabh and Gupta, Abhishek and Levine, Sergey and Hausman, Karol and Finn, Chelsea},
	journal={Advances in Neural Information Processing Systems},
	volume={33},
	pages={5824--5836},
	year={2020}
}

@inproceedings{li2018mmd,
	title={Domain generalization with adversarial feature learning},
	author={Li, Haoliang and Pan, Sinno Jialin and Wang, Shiqi and Kot, Alex C},
	booktitle={Computer Vision and Pattern Recognition},
	year={2018}
}

@article{arjovsky2019irm,
	title={Invariant risk minimization},
	author={Arjovsky, Martin and Bottou, L{\'e}on and Gulrajani, Ishaan and Lopez-Paz, David},
	journal={arXiv preprint arXiv:1907.02893},
	year={2019}
}

@article{ganin2016domain,
	title={Domain-adversarial training of neural networks},
	author={Ganin, Yaroslav and Ustinova, Evgeniya and Ajakan, Hana and Germain, Pascal and Larochelle, Hugo and Laviolette, Fran{\c{c}}ois and March, Mario and Lempitsky, Victor},
	journal={Journal of machine learning research},
	volume={17},
	number={59},
	pages={1--35},
	year={2016}
}

@inproceedings{li2018cdann,
	title={Domain generalization via conditional invariant representations},
	author={Li, Ya and Gong, Mingming and Tian, Xinmei and Liu, Tongliang and Tao, Dacheng},
	booktitle={AAAI Conference on Artificial Intelligence},
	volume={32},
	year={2018}
}

@article{huang2020rsc,
	title={Self-challenging improves cross-domain generalization},
	author={Huang, Zeyi and Wang, Haohan and Xing, Eric P and Huang, Dong},
	journal={European Conference on Computer Vision},
	year={2020},
}

@article{vapnik1998statistical,
	title={Statistical Learning Theory},
	author={Vapnik, V},
	journal={NY: Wiley},
	year={1998}
}

@inproceedings{
	shi2021gradient,
	title={Gradient Matching for Domain Generalization},
	author={Yuge Shi and Jeffrey Seely and Philip Torr and Siddharth N and Awni Hannun and Nicolas Usunier and Gabriel Synnaeve},
	booktitle={International Conference on Learning Representations},
	year={2022},
	url={https://openreview.net/forum?id=vDwBW49HmO}
}

@inproceedings{sun2016coral,
	title={Deep coral: Correlation alignment for deep domain adaptation},
	author={Sun, Baochen and Saenko, Kate},
	booktitle={European Conference on Computer Vision},
	year={2016},
}

@inproceedings{foret2020sharpness,
	title={Sharpness-Aware Minimization for Efficiently Improving Generalization},
	author={Foret, Pierre and Kleiner, Ariel and Mobahi, Hossein and Neyshabur, Behnam},
	booktitle={International Conference on Learning Representations},
	year={2021}
}

@inproceedings{
	zhuang2022surrogate,
	title={Surrogate Gap Minimization Improves Sharpness-Aware Training},
	author={Juntang Zhuang and Boqing Gong and Liangzhe Yuan and Yin Cui and Hartwig Adam and Nicha C Dvornek and sekhar tatikonda and James s Duncan and Ting Liu},
	booktitle={International Conference on Learning Representations},
	year={2022},
	url={https://openreview.net/forum?id=edONMAnhLu-}
}

@inproceedings{wang2023sharpness,
	title={Sharpness-aware gradient matching for domain generalization},
	author={Wang, Pengfei and Zhang, Zhaoxiang and Lei, Zhen and Zhang, Lei},
	booktitle={Proceedings of the IEEE/CVF Conference on Computer Vision and Pattern Recognition},
	pages={3769--3778},
	year={2023}
}

@inproceedings{le2024gradient,
	title={Gradient alignment for cross-domain face anti-spoofing},
	author={Le, Binh M and Woo, Simon S},
	booktitle={Proceedings of the IEEE/CVF Conference on Computer Vision and Pattern Recognition},
	pages={188--199},
	year={2024}
}

@inproceedings{gulrajani2020domainbed,
	title={In search of lost domain generalization},
	author={Gulrajani, Ishaan and Lopez-Paz, David},
	booktitle={International Conference on Learning Representations},
	year={2021}
}

@inproceedings{mansilla2021domain,
	title={Domain generalization via gradient surgery},
	author={Mansilla, Lucas and Echeveste, Rodrigo and Milone, Diego H and Ferrante, Enzo},
	booktitle={Proceedings of the IEEE/CVF international conference on computer vision},
	pages={6630--6638},
	year={2021}
}

@inproceedings{li2017deeper,
	title={Deeper, broader and artier domain generalization},
	author={Li, Da and Yang, Yongxin and Song, Yi-Zhe and Hospedales, Timothy M},
	booktitle={Proceedings of the IEEE international conference on computer vision},
	pages={5542--5550},
	year={2017}
}

@inproceedings{fang2013unbiased,
	title={Unbiased metric learning: On the utilization of multiple datasets and web images for softening bias},
	author={Fang, Chen and Xu, Ye and Rockmore, Daniel N},
	booktitle={Proceedings of the IEEE International Conference on Computer Vision},
	pages={1657--1664},
	year={2013}
}

@inproceedings{venkateswara2017deep,
	title={Deep hashing network for unsupervised domain adaptation},
	author={Venkateswara, Hemanth and Eusebio, Jose and Chakraborty, Shayok and Panchanathan, Sethuraman},
	booktitle={Proceedings of the IEEE conference on computer vision and pattern recognition},
	pages={5018--5027},
	year={2017}
}

@inproceedings{beery2018recognition,
	title={Recognition in terra incognita},
	author={Beery, Sara and Van Horn, Grant and Perona, Pietro},
	booktitle={Proceedings of the European conference on computer vision (ECCV)},
	pages={456--473},
	year={2018}
}

@inproceedings{peng2019moment,
	title={Moment matching for multi-source domain adaptation},
	author={Peng, Xingchao and Bai, Qinxun and Xia, Xide and Huang, Zijun and Saenko, Kate and Wang, Bo},
	booktitle={Proceedings of the IEEE/CVF international conference on computer vision},
	pages={1406--1415},
	year={2019}
}

@article{cha2021swad,
	title={Swad: Domain generalization by seeking flat minima},
	author={Cha, Junbum and Chun, Sanghyuk and Lee, Kyungjae and Cho, Han-Cheol and Park, Seunghyun and Lee, Yunsung and Park, Sungrae},
	journal={Advances in Neural Information Processing Systems},
	volume={34},
	pages={22405--22418},
	year={2021}
}

@inproceedings{kendall2018multi,
	title={Multi-task learning using uncertainty to weigh losses for scene geometry and semantics},
	author={Kendall, Alex and Gal, Yarin and Cipolla, Roberto},
	booktitle={Proceedings of the IEEE conference on computer vision and pattern recognition},
	pages={7482--7491},
	year={2018}
}

@article{sener2018multi,
	title={Multi-task learning as multi-objective optimization},
	author={Sener, Ozan and Koltun, Vladlen},
	journal={Advances in neural information processing systems},
	volume={31},
	year={2018}
}

@ARTICLE{9392366,
	author={Zhang, Yu and Yang, Qiang},
	journal={IEEE Transactions on Knowledge and Data Engineering}, 
	title={A Survey on Multi-Task Learning}, 
	year={2022},
	volume={34},
	number={12},
	pages={5586-5609},
	doi={10.1109/TKDE.2021.3070203}}

@article{wang2022generalizing,
	title={Generalizing to unseen domains: A survey on domain generalization},
	author={Wang, Jindong and Lan, Cuiling and Liu, Chang and Ouyang, Yidong and Qin, Tao and Lu, Wang and Chen, Yiqiang and Zeng, Wenjun and Philip, S Yu},
	journal={IEEE transactions on knowledge and data engineering},
	volume={35},
	number={8},
	pages={8052--8072},
	year={2022},
	publisher={IEEE}
}

@ARTICLE{10233054,
	author={Ballas, Aristotelis and Diou, Christos},
	journal={IEEE Transactions on Emerging Topics in Computational Intelligence}, 
	title={Towards Domain Generalization for ECG and EEG Classification: Algorithms and Benchmarks}, 
	year={2024},
	volume={8},
	number={1},
	pages={44-54},
	doi={10.1109/TETCI.2023.3306253}}

@inproceedings{krueger2021out,
	title={Out-of-distribution generalization via risk extrapolation (rex)},
	author={Krueger, David and Caballero, Ethan and Jacobsen, Joern-Henrik and Zhang, Amy and Binas, Jonathan and Zhang, Dinghuai and Le Priol, Remi and Courville, Aaron},
	booktitle={International conference on machine learning},
	pages={5815--5826},
	year={2021},
	organization={PMLR}
}

@inproceedings{welling2011bayesian,
	title={Bayesian learning via stochastic gradient Langevin dynamics},
	author={Welling, Max and Teh, Yee W},
	booktitle={Proceedings of the 28th international conference on machine learning (ICML-11)},
	pages={681--688},
	year={2011}
}

@inproceedings{zhang2024domaininspired,
	title={Domain-Inspired Sharpness Aware Minimization Under Domain Shifts},
	author={Ruipeng Zhang and Ziqing Fan and Jiangchao Yao and Ya Zhang and Yanfeng Wang},
	booktitle={The Twelfth International Conference on Learning Representations},
	year={2024},
	url={https://openreview.net/forum?id=I4wB3HA3dJ}
}

@inproceedings{dinh2017sharp,
	title={Sharp minima can generalize for deep nets},
	author={Dinh, Laurent and Pascanu, Razvan and Bengio, Samy and Bengio, Yoshua},
	booktitle={International Conference on Machine Learning},
	pages={1019--1028},
	year={2017},
	organization={PMLR}
}

@inproceedings{kwon2021asam,
	title={Asam: Adaptive sharpness-aware minimization for scale-invariant learning of deep neural networks},
	author={Kwon, Jungmin and Kim, Jeongseop and Park, Hyunseo and Choi, In Kwon},
	booktitle={International conference on machine learning},
	pages={5905--5914},
	year={2021},
	organization={PMLR}
}

@inproceedings{ballas2025gradient,
	title={Gradient-guided annealing for domain generalization},
	author={Ballas, Aristotelis and Diou, Christos},
	booktitle={Proceedings of the Computer Vision and Pattern Recognition Conference},
	pages={20558--20568},
	year={2025}
}

@misc{jordan2024muon,
	author       = {Keller Jordan and Yuchen Jin and Vlado Boza and You Jiacheng and
	Franz Cesista and Laker Newhouse and Jeremy Bernstein},
	title        = {Muon: An optimizer for hidden layers in neural networks},
	year         = {2024},
	url          = {https://kellerjordan.github.io/posts/muon/}
}

@article{kovarik1970some,
	title={Some iterative methods for improving orthonormality},
	author={Kovarik, Zdislav},
	journal={SIAM Journal on Numerical Analysis},
	volume={7},
	number={3},
	pages={386--389},
	year={1970},
	publisher={SIAM}
}

@article{bjorck1971iterative,
	title={An iterative algorithm for computing the best estimate of an orthogonal matrix},
	author={Bj{\"o}rck, {\AA}ke and Bowie, Clazett},
	journal={SIAM Journal on Numerical Analysis},
	volume={8},
	number={2},
	pages={358--364},
	year={1971},
	publisher={SIAM}
}

@article{guo2006schur,
	title={A schur--newton method for the matrix$\backslash$boldmath p th root and its inverse},
	author={Guo, Chun-Hua and Higham, Nicholas J},
	journal={SIAM Journal on Matrix Analysis and Applications},
	volume={28},
	number={3},
	pages={788--804},
	year={2006},
	publisher={SIAM}
}

@article{bernstein2024old,
	title={Old optimizer, new norm: An anthology},
	author={Bernstein, Jeremy and Newhouse, Laker},
	journal={arXiv preprint arXiv:2409.20325},
	year={2024}
}

@article{liu2021conflict,
	title={Conflict-averse gradient descent for multi-task learning},
	author={Liu, Bo and Liu, Xingchao and Jin, Xiaojie and Stone, Peter and Liu, Qiang},
	journal={Advances in neural information processing systems},
	volume={34},
	pages={18878--18890},
	year={2021}
}

@inproceedings{
	shin2025sassha,
	title={Sassha: Sharpness-aware Adaptive Second-order Optimization with Stable Hessian Approximation},
	author={Dahun Shin and Dongyeop Lee and Jinseok Chung and Namhoon Lee},
	booktitle={Forty-second International Conference on Machine Learning},
	year={2025},
	url={https://openreview.net/forum?id=7bgqx5OoVe}
}

@inproceedings{zhang2023gradient,
	title={Gradient norm aware minimization seeks first-order flatness and improves generalization},
	author={Zhang, Xingxuan and Xu, Renzhe and Yu, Han and Zou, Hao and Cui, Peng},
	booktitle={Proceedings of the IEEE/CVF Conference on Computer Vision and Pattern Recognition},
	pages={20247--20257},
	year={2023}
}

@inproceedings{liexploring,
	title={Exploring Mode Connectivity in Krylov Subspace for Domain Generalization},
	author={Li, Aodi and Zhuang, Liansheng and Long, Xiao and Li, Houqiang and Wang, Shafei},
	booktitle={The Fourteenth International Conference on Learning Representations},
	year={2026}
}

@inproceedings{li2022invariant,
	title={Invariant information bottleneck for domain generalization},
	author={Li, Bo and Shen, Yifei and Wang, Yezhen and Zhu, Wenzhen and Li, Dongsheng and Keutzer, Kurt and Zhao, Han},
	booktitle={Proceedings of the AAAI Conference on Artificial Intelligence},
	volume={36},
	pages={7399--7407},
	year={2022}
}

@article{li2018visualizing,
	title={Visualizing the loss landscape of neural nets},
	author={Li, Hao and Xu, Zheng and Taylor, Gavin and Studer, Christoph and Goldstein, Tom},
	journal={Advances in neural information processing systems},
	volume={31},
	year={2018}
}

@article{wu2017towards,
	title={Towards understanding generalization of deep learning: Perspective of loss landscapes},
	author={Wu, Lei and Zhu, Zhanxing and others},
	journal={arXiv preprint arXiv:1706.10239},
	year={2017}
}

@article{fort2019large,
	title={Large scale structure of neural network loss landscapes},
	author={Fort, Stanislav and Jastrzebski, Stanislaw},
	journal={Advances in Neural Information Processing Systems},
	volume={32},
	year={2019}
}

@phdthesis{rangamani2020loss,
	title={Loss landscapes and generalization in neural networks: Theory and applications},
	author={Rangamani, Akshay and others},
	year={2020},
	school={Johns Hopkins University}
}

@article{keskar2016large,
	title={On large-batch training for deep learning: Generalization gap and sharp minima},
	author={Keskar, Nitish Shirish and Mudigere, Dheevatsa and Nocedal, Jorge and Smelyanskiy, Mikhail and Tang, Ping Tak Peter},
	journal={arXiv preprint arXiv:1609.04836},
	year={2016}
}

@article{jiang2019fantastic,
	title={Fantastic generalization measures and where to find them},
	author={Jiang, Yiding and Neyshabur, Behnam and Mobahi, Hossein and Krishnan, Dilip and Bengio, Samy},
	journal={arXiv preprint arXiv:1912.02178},
	year={2019}
}

@inproceedings{kim2022fisher,
	title={Fisher sam: Information geometry and sharpness aware minimisation},
	author={Kim, Minyoung and Li, Da and Hu, Shell X and Hospedales, Timothy},
	booktitle={International Conference on Machine Learning},
	pages={11148--11161},
	year={2022},
	organization={PMLR}
}

@inproceedings{rame2022fishr,
	title={Fishr: Invariant gradient variances for out-of-distribution generalization},
	author={Rame, Alexandre and Dancette, Corentin and Cord, Matthieu},
	booktitle={International Conference on Machine Learning},
	pages={18347--18377},
	year={2022},
	organization={PMLR}
}

@article{teh2016consistency,
	title={Consistency and fluctuations for stochastic gradient Langevin dynamics},
	author={Teh, Yee Whye and Thi{\'e}ry, Alexandre and Vollmer, Sebastian J},
	journal={Journal of Machine Learning Research},
	volume={17},
	number={7},
	year={2016},
	publisher={Journal of Machine Learning Research}
}

@article{li2023enhancing,
	title={Enhancing sharpness-aware optimization through variance suppression},
	author={Li, Bingcong and Giannakis, Georgios},
	journal={Advances in Neural Information Processing Systems},
	volume={36},
	pages={70861--70879},
	year={2023}
}

@inproceedings{zhang2025preconditioned,
	title={Preconditioned sharpness-aware minimization: Unifying analysis and a novel learning algorithm},
	author={Zhang, Yilang and Li, Bingcong and Giannakis, Georgios B},
	booktitle={ICASSP 2025-2025 IEEE International Conference on Acoustics, Speech and Signal Processing (ICASSP)},
	pages={1--5},
	year={2025},
	organization={IEEE}
}

@inproceedings{ban2025samo,
	title={Samo: A lightweight sharpness-aware approach for multi-task optimization with joint global-local perturbation},
	author={Ban, Hao and Subramani, Gokul Ram and Ji, Kaiyi},
	booktitle={Proceedings of the IEEE/CVF International Conference on Computer Vision},
	pages={785--795},
	year={2025}
}

@InProceedings{Senushkin_2023_CVPR,
	author    = {Senushkin, Dmitry and Patakin, Nikolay and Kuznetsov, Arseny and Konushin, Anton},
	title     = {Independent Component Alignment for Multi-Task Learning},
	booktitle = {Proceedings of the IEEE/CVF Conference on Computer Vision and Pattern Recognition (CVPR)},
	month     = {June},
	year      = {2023},
	pages     = {20083-20093}
}

@book{van2000asymptotic,
	title={Asymptotic statistics},
	author={Van der Vaart, Aad W},
	volume={3},
	year={2000},
	publisher={Cambridge university press}
}

@misc{wen2023doessharpnessawareminimizationminimize,
	title={How Does Sharpness-Aware Minimization Minimize Sharpness?}, 
	author={Kaiyue Wen and Tengyu Ma and Zhiyuan Li},
	year={2023},
	eprint={2211.05729},
	archivePrefix={arXiv},
	primaryClass={cs.LG},
	url={https://arxiv.org/abs/2211.05729}, 
}

@inproceedings{gupta2018shampoo,
	title={Shampoo: Preconditioned stochastic tensor optimization},
	author={Gupta, Vineet and Koren, Tomer and Singer, Yoram},
	booktitle={International Conference on Machine Learning},
	pages={1842--1850},
	year={2018},
	organization={PMLR}
}

@inproceedings{navon2022multi,
	title={Multi-task learning as a bargaining game},
	author={Navon, Aviv and Shamsian, Aviv and Achituve, Idan and Maron, Haggai and Kawaguchi, Kenji and Chechik, Gal and Fetaya, Ethan},
	booktitle={International Conference on Machine Learning},
	pages={16428–-16446},
	year={2022},
	organization={PMLR}
}

@inproceedings{cordts2016cityscapes,
	title={The cityscapes dataset for semantic urban scene understanding},
	author={Cordts, Marius and Omran, Mohamed and Ramos, Sebastian and Rehfeld, Timo and Enzweiler, Markus and Benenson, Rodrigo and Franke, Uwe and Roth, Stefan and Schiele, Bernt},
	booktitle={Proceedings of the IEEE conference on computer vision and pattern recognition},
	pages={3213--3223},
	year={2016}
}

@inproceedings{silberman2012indoor,
	title={Indoor segmentation and support inference from rgbd images},
	author={Silberman, Nathan and Hoiem, Derek and Kohli, Pushmeet and Fergus, Rob},
	booktitle={European conference on computer vision},
	pages={746--760},
	year={2012},
	organization={Springer}
}

@inproceedings{radford2021learning,
	title={Learning transferable visual models from natural language supervision},
	author={Radford, Alec and Kim, Jong Wook and Hallacy, Chris and Ramesh, Aditya and Goh, Gabriel and Agarwal, Sandhini and Sastry, Girish and Askell, Amanda and Mishkin, Pamela and Clark, Jack and others},
	booktitle={International conference on machine learning},
	pages={8748--8763},
	year={2021},
	organization={PmLR}
}

@inproceedings{
	dosovitskiy2021an,
	title={An Image is Worth 16x16 Words: Transformers for Image Recognition at Scale},
	author={Alexey Dosovitskiy and Lucas Beyer and Alexander Kolesnikov and Dirk Weissenborn and Xiaohua Zhai and Thomas Unterthiner and Mostafa Dehghani and Matthias Minderer and Georg Heigold and Sylvain Gelly and Jakob Uszkoreit and Neil Houlsby},
	booktitle={International Conference on Learning Representations},
	year={2021},
	url={https://openreview.net/forum?id=YicbFdNTTy}
}

@article{liu2023famo,
	title={Famo: Fast adaptive multitask optimization},
	author={Liu, Bo and Feng, Yihao and Stone, Peter and Liu, Qiang},
	journal={Advances in Neural Information Processing Systems},
	volume={36},
	pages={57226--57243},
	year={2023}
}

@article{badrinarayanan2017segnet,
	title={Segnet: A deep convolutional encoder-decoder architecture for image segmentation},
	author={Badrinarayanan, Vijay and Kendall, Alex and Cipolla, Roberto},
	journal={IEEE transactions on pattern analysis and machine intelligence},
	volume={39},
	number={12},
	pages={2481--2495},
	year={2017},
	publisher={IEEE}
}

@article{lin2021reasonable,
	title={Reasonable effectiveness of random weighting: A litmus test for multi-task learning},
	author={Lin, Baijiong and Ye, Feiyang and Zhang, Yu and Tsang, Ivor W},
	journal={arXiv preprint arXiv:2111.10603},
	year={2021}
}

@inproceedings{liu2019end,
	title={End-to-end multi-task learning with attention},
	author={Liu, Shikun and Johns, Edward and Davison, Andrew J},
	booktitle={Proceedings of the IEEE/CVF conference on computer vision and pattern recognition},
	pages={1871--1880},
	year={2019}
}

@inproceedings{
	liu2021towards,
	title={Towards Impartial Multi-task Learning},
	author={Liyang Liu and Yi Li and Zhanghui Kuang and Jing-Hao Xue and Yimin Chen and Wenming Yang and Qingmin Liao and Wayne Zhang},
	booktitle={International Conference on Learning Representations},
	year={2021},
	url={https://openreview.net/forum?id=IMPnRXEWpvr}
}

@article{chen2020just,
	title={Just pick a sign: Optimizing deep multitask models with gradient sign dropout},
	author={Chen, Zhao and Ngiam, Jiquan and Huang, Yanping and Luong, Thang and Kretzschmar, Henrik and Chai, Yuning and Anguelov, Dragomir},
	journal={Advances in Neural Information Processing Systems},
	volume={33},
	pages={2039--2050},
	year={2020}
}

@inproceedings{fernando2023mitigating,
	title={Mitigating gradient bias in multi-objective learning: A provably convergent approach},
	author={Fernando, Heshan Devaka and Shen, Han and Liu, Miao and Chaudhury, Subhajit and Murugesan, Keerthiram and Chen, Tianyi},
	booktitle={The eleventh international conference on learning representations},
	year={2023}
}

@inproceedings{10.5555/3692070.3692179,
	author = {Ban, Hao and Ji, Kaiyi},
	title = {Fair resource allocation in multi-task learning},
	year = {2024},
	publisher = {JMLR.org},
	booktitle = {Proceedings of the 41st International Conference on Machine Learning},
	articleno = {109},
	numpages = {17},
	location = {Vienna, Austria},
	series = {ICML'24}
}

@inproceedings{phan2025beyond,
	title={Beyond Losses Reweighting: Empowering Multi-Task Learning via the Generalization Perspective},
	author={Phan, Hoang and Tran, Lam and Tran, Quyen and Tran, Ngoc and Truong, Tuan and Lei, Qi and Ho, Nhat and Phung, Dinh and Le, Trung},
	booktitle={Proceedings of the IEEE/CVF International Conference on Computer Vision},
	pages={2440--2450},
	year={2025}
}

@article{desideri2012multiple,
	title={Multiple-gradient descent algorithm (MGDA) for multiobjective optimization},
	author={D{\'e}sid{\'e}ri, Jean-Antoine},
	journal={Comptes Rendus. Math{\'e}matique},
	volume={350},
	number={5-6},
	pages={313--318},
	year={2012}
}

\clearpage

\appendix



\addcontentsline{toc}{section}{Appendix} 
\renewcommand \thepart{} 
\renewcommand \partname{}
\part{\Large{~~~~~~~~~~~~~~~~~~~~~~~~~~~~~~~~~~~~~~~~~~~~~~~Appendix}}
\parttoc 

\clearpage


\section{Extended Related Work}
\label{app:related_work}

%

\subsection{Domain Generalization}
\label{app:related_work_dg}

Domain generalization (DG) aims to train a model on data from multiple source 
domains such that it generalizes to previously unseen target domains whose 
distributions may differ substantially from those encountered during 
training~\cite{wang2022generalizing, zhou_domain_2023}. The core challenge is 
that Empirical Risk Minimization (ERM), which minimizes the average loss over 
all training data, may converge to solutions that exploit domain-specific 
shortcuts rather than domain-invariant features, resulting in poor 
out-of-distribution performance. In this section, we will focus on the 
methodological groups that are closest in nature with our proposed method, namely domain-invariant representation learning, sharpness-aware optimization, 
gradient-based alignment, and loss-landscape exploration.

\paragraph{Domain-invariant and robust optimization methods.}
A central line of work found in the literature seeks representations that are 
invariant across training domains. In one of the most fundamental works, Invariant Risk Minimization (IRM)~\cite{arjovsky2019irm} constrains the optimal 
classifier to be identical across all data domains, while 
V-REx~\cite{krueger2021out} penalizes the variance of per-domain risks. 
Fishr~\cite{rame2022fishr} on the other hand enforces consistency of per-domain gradient variances, and CORAL~\cite{sun2016coral} aligns second-order feature statistics across domains. Although these methods directly modify the training 
objective to encourage domain-invariant solutions, they do not directly 
consider the geometry of the loss landscape.

\paragraph{Sharpness-aware minimization for DG.}
As already mentioned in the main text, the observation that flat minima tend to generalize better than sharp ones~\cite{foret2020sharpness, keskar2016large} has motivated the application of SAM and its variants to domain generalization. 
Specifically, GSAM~\cite{zhuang2022surrogate} refines the sharpness estimate via a surrogate gap objective, ASAM~\cite{kwon2021asam} introduces adaptive, scale-aware perturbation magnitudes, GAM~\cite{zhang2023gradient} regularizes the gradient norm to seek first-order flatness, and VaSSO~\cite{li2023enhancing} addresses the noise introduced by mini-batch sampling in the perturbation direction. Furthermore, SAGM~\cite{wang2023sharpness} simultaneously minimizes the empirical loss, the perturbed loss, and the surrogate gap between them, encouraging the clean-loss gradient $\nabla\mathcal{L}(\theta)$ and the perturbed-loss gradient $\nabla\mathcal{L}_p(\theta)$ to remain aligned so that the optimizer converges to regions that are both flat and low-loss. Importantly, the gradient matching is between the clean and perturbed objectives at the \emph{aggregate} level and does not involve per-domain gradient information. Finally, Fisher SAM~\cite{kim2022fisher} re-parameterizes the perturbation using the Fisher information metric to better capture the intrinsic geometry of the model.

A key limitation of vanilla SAM under domain shifts was identified by Zhang et 
al.~\cite{zhang2024domaininspired}, who observed that when domains differ in 
difficulty or data quantity, SAM's gradient-based perturbation is biased toward 
whichever domain has the largest gradient, disrupting training for the others. 
To address this, they proposed DISAM (Domain-Inspired SAM), which incorporates 
a domain-loss variance minimization constraint into the perturbation generation 
step. The resulting perturbation adaptively up-weights well-converged domains 
and down-weights under-converged ones, yielding faster overall convergence and 
improved generalization. DISAM is compatible with other SAM variants (GSAM, 
SAGM) and achieves consistent improvements on the DomainBed benchmark, 
particularly on datasets with large domain gaps such as TerraIncognita. Finally, complementary ensemble-based approaches such as 
SWAD~\cite{cha2021swad} achieve flatness passively by averaging model weights 
densely along the training trajectory, rather than explicitly optimizing a 
sharpness objective.

\paragraph{Gradient-alignment methods for DG.}
A parallel family of approaches leverages the agreement of per-domain gradients 
as a signal for domain invariance. The intuition is straightforward: if a 
model's internal representation does not depend on the domain, then the 
expected gradients of the loss with respect to different domains should point 
in similar directions. Shi et al.~\cite{shi2021gradient} proposed Fish, which 
approximately maximizes the inner product of per-domain gradients. Mansilla et 
al.~\cite{mansilla2021domain} adapted gradient surgery from multi-task 
learning~\cite{yu2020gradient} to DG by muting or randomizing gradient 
dimensions where domains disagree in sign. Le and Woo~\cite{le2024gradient} 
applied gradient alignment to cross-domain face anti-spoofing, demonstrating 
the generality of the approach beyond standard image classification benchmarks.
More recently, Gradient-Guided Annealing (GGA)~\cite{ballas2025gradient} was proposed, for searching parameter configurations via a Simulated Annealing-inspired process, where domain gradients are well-aligned before continuing standard optimization. 

\paragraph{Beyond local flatness: mode connectivity.}
While the methods above focus on properties local to a single minimum 
(flatness, gradient alignment), Li et al.~\cite{liexploring} recently shifted 
attention to a global property of the loss landscape: \emph{mode connectivity}, 
the phenomenon whereby distinct local minima are connected by continuous 
low-loss pathways. They demonstrated empirically that models with poor and 
strong out-of-domain generalization can be connected via such pathways and 
proposed the Billiard Optimization Algorithm (BOA), which navigates these 
pathways by alternating line search (to locate loss contour boundaries) and 
reflection (to redirect the trajectory along the contour). To overcome the 
curse of dimensionality in high-dimensional parameter spaces, BOA operates 
within a low-dimensional Krylov subspace constructed from training gradients 
and Hessian--vector products, which they showed to be well-aligned with test 
gradients across diverse datasets and architectures. BOA achieves 
state-of-the-art results on DomainBed with ViT backbones, outperforming all 
sharpness-aware baselines, and represents a qualitatively different approach to 
loss-landscape exploitation for DG.

\subsection{Multi-Task Learning}
\label{app:related_work_mtl}

Multi-task learning (MTL) trains a single model to perform multiple tasks 
simultaneously, with the goal of exploiting shared structure across tasks to 
improve data efficiency and generalization~\cite{9392366}. The standard 
approach minimizes the average loss across all tasks, however, this simple 
objective often leads to suboptimal solutions because the gradients of 
different tasks may differ substantially in magnitude or direction, a 
phenomenon commonly referred to as \emph{task conflict} or \emph{conflicting 
gradients}~\cite{yu2020gradient, liu2021conflict}. When one task's gradient 
dominates the shared update, other tasks may stagnate or even regress, 
resulting in performance that falls short of what independent single-task 
models can achieve. Addressing this fundamental optimization challenge has 
motivated a rich body of work, which can be broadly organized into 
loss-weighting methods, gradient manipulation methods, and more recent 
approaches that integrate sharpness-aware minimization into the MTL framework.

\paragraph{Loss weighting methods.}
A natural strategy for balancing tasks is to assign scalar weights to each 
task's loss and adapt those weights during training. Kendall et 
al.~\cite{kendall2018multi} proposed Uncertainty Weighting (UW), which uses 
learned task-dependent homoscedastic uncertainty as a proxy for task 
difficulty. Liu et al.~\cite{liu2019end} introduced Dynamic Weight Average 
(DWA), a heuristic that adjusts task weights based on the rate of change of 
each task's loss over recent iterations. Lin et al.~\cite{lin2021reasonable} 
showed that randomly sampling task weights at each step, i.e., Random Loss 
Weighting (RLW), can be surprisingly effective. A scale-invariant (SI) baseline 
that minimizes the sum of log-losses has also been considered, as it is 
invariant to multiplicative rescaling of individual task 
losses~\cite{liu2023famo}. While computationally lightweight, these methods 
adjust only the loss aggregation and do not directly resolve geometric 
conflicts in the gradient space.

\paragraph{Gradient manipulation methods.}
A more principled family of approaches directly modifies the update direction 
to ensure balanced progress across tasks. 
D\'{e}sid\'{e}ri~\cite{desideri2012multiple} introduced the Multiple Gradient 
Descent Algorithm (MGDA), which finds a convex combination of task gradients 
lying in the common descent cone, and Sener and Koltun~\cite{sener2018multi} 
adapted MGDA for deep networks. PCGrad~\cite{yu2020gradient} projects each 
task's gradient onto the normal plane of conflicting gradients from other 
tasks. GradDrop~\cite{chen2020just} randomly zeroes gradient dimensions where 
tasks disagree in sign. CAGrad~\cite{liu2021conflict} constrains the update 
within a ball around the average gradient while maximizing worst-case per-task 
improvement. IMTL-G~\cite{liu2021towards} finds an update direction with equal 
cosine similarity to all task gradients, while ICA~\cite{Senushkin_2023_CVPR} 
decomposes task gradients into independent components. 
Nash-MTL~\cite{navon2022multi} formulates the problem as a bargaining game, and 
FairGrad~\cite{10.5555/3692070.3692179} extends the bargaining framework with a 
fairness-aware conic combination. MoCo~\cite{fernando2023mitigating} corrects 
gradient bias through a momentum-based mechanism. A common limitation of all 
these methods is their $\mathcal{O}(K)$ time and space overhead per iteration, 
which becomes prohibitive when both the number of tasks and the model size are 
large.

\paragraph{Efficient multi-task optimization.}
To address this scalability bottleneck, Liu et al.~\cite{liu2023famo} proposed 
FAMO, which decreases all task losses at approximately equal \emph{rates} by 
working in log-loss space and amortizing the weight update using only the 
change in task losses between consecutive iterations. This achieves 
$\mathcal{O}(1)$ space and time overhead per iteration, matching the complexity 
of standard average-loss training while remaining competitive with gradient 
manipulation methods on standard benchmarks.

\paragraph{Sharpness-aware minimization in MTL.}
Ban et al.~\cite{ban2025samo} provided empirical evidence that 
SAM~\cite{foret2020sharpness} mitigates task conflicts by guiding optimization 
toward flatter loss regions where changes in one task's loss do not 
substantially affect others. Their analysis revealed that applying SAM to the 
average loss (global SAM) and to each task individually (local SAM) both 
contribute to conflict reduction, but neither is consistently superior. Phan et 
al.~\cite{phan2025beyond} proposed F-MTL, which applies SAM independently to 
each task but incurs $K$ additional backpropagations. Ban et 
al.~\cite{ban2025samo} introduced SAMO, which combines global and local 
perturbation information via a weighted average and approximates expensive 
local gradients using a zeroth-order SPSA estimator requiring only forward 
passes. A layerwise normalization scheme stabilizes the estimates, achieving 
the benefits of joint global--local perturbations at a cost comparable to 
global-only SAM.

\subsection{Positioning of SAGE}
\label{app:related_work_positioning}

The methods reviewed above target individual geometric properties of the loss 
landscape, i.e., flatness, gradient alignment, mode connectivity, or task 
balancing, but limited methods address more than one simultaneously. Sharpness-aware methods (SAM, GSAM, SAGM, DISAM) seek flat minima without explicitly considering whether per-distribution gradients agree at those minima. Gradient-alignment methods (Fish, GGA, PCGrad in DG; PCGrad, CAGrad, Nash-MTL in MTL) enforce gradient agreement without considering the curvature of the resulting solution. The closest prior work is that of SAGM~\cite{wang2023sharpness}, which also augments SAM with a gradient-matching 
term. However, the two methods differ in three respects. First, SAGM's 
gradient matching aligns the clean-loss and perturbed-loss gradients, which are 
both computed on the \emph{aggregate} data, to ensure joint descent toward 
flat, low-loss regions and does not compute per-distribution gradients. 

SAGE addresses the curvature through a spectral perturbation that replaces 
SAM's gradient-scaled ascent with the polar factor of each layer's gradient 
matrix, computed via Newton--Schulz iteration and scaled by the layer's 
Frobenius norm. Regarding gradient alignment term, SAGE injects isotropic 
Gaussian noise at the descent step whose magnitude scales with the degree of 
cross-environment gradient conflict, $\beta = \gamma(1 - S(\theta))$, 
discouraging convergence to regions of high gradient disagreement. Unlike 
methods that directly modify the gradient direction (PCGrad, Fish) or solve 
auxiliary optimization problems (Nash-MTL, FAMO), SAGE's noise injection is 
lightweight and requires no per-environment gradient storage beyond what is needed for computing the pairwise cosine similarity. This noise injection was
inspired from the recent GGA~\cite{ballas2025gradient} work.

Importantly, SAGE operates as a drop-in replacement for the optimizer's ascent 
and descent steps, making it compatible with existing gradient manipulation 
methods in MTL (as demonstrated by the SAGE-MGDA, SAGE-FairGrad, and SAGE-SAMO 
combinations in Table~\ref{tab:combined}) and with the standard DomainBed 
protocol in DG (Table~\ref{table:baseline-results}). 

\section{Proofs}
\label{app:proofs}

\subsection{Setup Reminder:}

 For each $e \in \mathcal{E}$, let $\mathcal{L}_e : \Theta \to \mathbb{R}$ denote a per-distribution loss, where $\Theta \subseteq \mathbb{R}^d$ is an open parameter space. The \emph{population risk} is:
\begin{equation}
	R(\theta) := \mathbb{E}_{e \sim \mathcal{P}}[\mathcal{L}_e(\theta)],
\end{equation}
and the \emph{empirical multi-distribution risk} over $K$ iid training distributions $e_1, \dots, e_K \sim \mathcal{P}$ is
\begin{equation}
	\hat{R}(\theta) := \frac{1}{K}\sum_{k=1}^K \mathcal{L}_{e_k}(\theta).
\end{equation}

At any base point $\theta_0 \in \Theta$ we introduce the per-distribution gradient and Hessian,
\vspace{-0.1em}
\begin{equation}
	g_e := \nabla_\theta \mathcal{L}_e(\theta_0), \qquad H_e := \nabla_\theta^2 \mathcal{L}_e(\theta_0),
\end{equation}
and the following cross-distribution statistics,
\vspace{-0.1em}
\begin{equation}
	\bar{g} := \mathbb{E}_e[g_e], \qquad \bar{H} := \mathbb{E}_e[H_e], \qquad \Sigma_g := \mathrm{Cov}_e(g_e) = \mathbb{E}_e\!\left[(g_e - \bar{g})(g_e - \bar{g})^T\right].
\end{equation}

\subsection{Proof of Theorem~\ref{thm:decomposition}: Multi-distribution excess-risk decomposition}
\label{app:thm1_proof}

\begin{proof}[Proof of Theorem~\ref{thm:decomposition}]
	We restate the Theorem and Assumption included in the main text for completeness.
	
	\setcounter{theorem}{0}

\begin{neuraltheorem}
	\begin{theorem}[Multi-distribution excess-risk decomposition]
		\label{thm:decomposition2}
		Under Assumption~\ref{ass:regularity}, let $\hat{\theta}$ denote the minimizer of $\hat{R}$ over $K$ iid training distributions, and let $\xi \sim \mathcal{N}(0, \sigma^2 I)$ denote isotropic Gaussian noise of scale $\sigma > 0$. Then, for a parameter $\theta = \hat\theta + \xi$:
		\begin{equation}
			\mathbb{E}\!\left[\mathbb{E}_{\xi}[R(\theta)]\right] - R(\theta^\star)
			\;=\; 
			\underbrace{\frac{1}{2K}\,\mathrm{tr}\!\left(\bar{H}^{-1}\Sigma_g\right)}_{\text{\normalfont alignment term}}
			\;+\; 
			\underbrace{\frac{\sigma^2}{2}\,\mathrm{tr}(\bar{H})}_{\text{\normalfont curvature term}}
			\;+\; O(K^{-3/2}) \;+\; O(\sigma^3),
			\label{eq:decomposition}
		\end{equation}
		where the outer expectation is over the iid sampling of training distributions, $\bar{H} := \nabla^2 R(\theta^\star)$, and $\Sigma_g := \mathrm{Cov}_e(\nabla\mathcal{L}_e(\theta^\star))$.
	\end{theorem}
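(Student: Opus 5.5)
We will condition on the training draw, compute $\mathbb{E}_\xi[R(\hat\theta+\xi)]$ by a Taylor expansion of $R$ around $\hat\theta$, and then take the outer expectation by expanding $R(\hat\theta)$ around $\theta^\star$. The two terms of \eqref{eq:decomposition} then come from these two separate second-order expansions. The cross-effects between $\xi$ and $\hat\theta-\theta^\star$ are pushed into the remainders.

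\emph{Step 1 (noise average).} Fix the training draw and assume $\hat\theta$ lies in the neighbourhood of $\theta^\star$ where third derivatives are bounded. Taylor's theorem with third-order remainder gives
\[
R(\hat\theta+\xi)=R(\hat\theta)+\nabla R(\hat\theta)^T\xi+\tfrac12\xi^T\nabla^2R(\hat\theta)\xi+O(\|\xi\|^3).
\]
Taking $\mathbb{E}_\xi$, the linear term vanishes because $\mathbb{E}\xi=0$. The quadratic term gives $\tfrac{\sigma^2}{2}\mathrm{tr}(\nabla^2R(\hat\theta))$. The remainder is $O(\mathbb{E}\|\xi\|^3)=O(\sigma^3)$, with the dimension $d$ held fixed. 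Since $\nabla^2 R$ is Lipschitz, $\mathrm{tr}(\nabla^2R(\hat\theta))=\mathrm{tr}(\bar H)+O(\|\hat\theta-\theta^\star\|)$. After the outer expectation, the cross term is $\sigma^2\,O(K^{-1/2})$. I would absorb it into $O(\sigma^3)+O(K^{-3/2})$ using $ab\le a^{3/2}+b^3$ with $a=\sigma^2$ and $b=K^{-1/2}$, which gives $\sigma^2K^{-1/2}\le\sigma^3+K^{-3/2}$.

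\emph{Step 2 (statistical term).} Write $\delta=\hat\theta-\theta^\star$. Expanding with $\nabla R(\theta^\star)=\bar g=0$ gives
\[
R(\hat\theta)-R(\theta^\star)=\tfrac12\delta^T\bar H\delta+O(\|\delta\|^3).
\]
For $\delta$ I use the estimating equation $\nabla\hat R(\hat\theta)=0$. Expanding it around $\theta^\star$ gives $0=\hat g+\hat H\delta+O(\|\delta\|^2)$, where $\hat g=\frac1K\sum_k g_{e_k}$ and $\hat H=\frac1K\sum_k H_{e_k}$, all evaluated at $\theta^\star$. Since $\hat H=\bar H+O_P(K^{-1/2})$ and $\bar H\succ0$, this yields $\delta=-\bar H^{-1}\hat g+O_P(K^{-1})$ and $\|\delta\|=O_P(K^{-1/2})$. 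Substituting, $\tfrac12\delta^T\bar H\delta=\tfrac12\hat g^T\bar H^{-1}\hat g+O_P(K^{-3/2})$. Also $\mathbb{E}[\hat g\hat g^T]=\mathrm{Cov}(\hat g)=\Sigma_g/K$, since the $e_k$ are iid and $\bar g=0$. Hence
\[
\mathbb{E}\bigl[\tfrac12\hat g^T\bar H^{-1}\hat g\bigr]=\tfrac{1}{2K}\mathrm{tr}(\bar H^{-1}\Sigma_g),
\]
which is the alignment term. Adding Steps 1 and 2 gives \eqref{eq:decomposition}.

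\emph{Main obstacle.} The hard step is converting the $O_P$ statements into bounds on \emph{expectations}. This needs three things. First, the existence and consistency of $\hat\theta$ near $\theta^\star$ for large $K$, from the implicit-function theorem. Second, control of the event where $\hat\theta$ leaves the neighbourhood. Third, enough moments to bound $\mathbb{E}\|\delta\|^3$ and the expectation of the $O_P(K^{-1})$ correction multiplied by $\hat g$.

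Finite second moments alone do not give $\mathbb{E}\|\delta\|^3=O(K^{-3/2})$, so I expect to need a stronger condition. The first option I would try is reading the regularity assumption as also giving bounded higher moments of $g_e$ and $H_e$, or assuming uniformly bounded losses. I would then invoke standard M-estimator expansions (van der Vaart) to obtain $\mathbb{E}\|\delta\|^3=O(K^{-3/2})$ and $\mathbb{E}[\hat g^T\bar H^{-1}r]=O(K^{-3/2})$, where $r$ is the second-order correction term. If that fails, I would state the result as an asymptotic expansion of the leading-order expected risk, meaning the expectation of the stochastic expansion, in keeping with the paper's proof sketch.
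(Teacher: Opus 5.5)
Your proposal is essentially the paper's proof, with the two Taylor expansions taken in the opposite order. The same implicit-function step $\hat\theta-\theta^\star=-\bar H^{-1}\hat g+O_P(K^{-1})$, combined with $\mathbb{E}[\hat g\hat g^T]=\Sigma_g/K$, gives the alignment term, and the same expansion at $\hat\theta$ with $\nabla^2R(\hat\theta)=\bar H+O(\|\hat\theta-\theta^\star\|)$ gives the curvature term.

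You are more careful than the paper in three places. The paper only asserts that the $\sigma^2K^{-1/2}$ cross term is absorbed into the stated error; your bound $\sigma^2K^{-1/2}\le\sigma^3+K^{-3/2}$ actually justifies this. The paper also silently turns $O_P(K^{-3/2})$ into $O(K^{-3/2})$ after taking expectations, which is exactly the step you correctly say needs more than finite second moments. The same caveat applies to $\mathbb{E}_\xi$ of the cubic remainder, since the third-derivative bound is only local while $\xi$ is Gaussian; your bounded-loss option would handle that tail.
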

\end{neuraltheorem}
	
    \setcounter{assumption}{0}
	\begin{assumption}[Regularity]
		\label{ass:regularity2}
		Each $\mathcal{L}_e$ is three times continuously differentiable in $\theta$, with third derivatives uniformly bounded in a neighborhood of $\theta^\star := \arg\min_\theta R(\theta)$, and $\bar{H}(\theta^\star) \succ 0$ (Positive-Definite). The gradients $g_e(\theta^\star)$ have finite second moments under $\mathcal{P}$.
	\end{assumption}
	
	Assumption~\ref{ass:regularity2} is the standard regularity condition under which quadratic excess-risk expansions are valid, which is however strictly weaker than the convexity assumptions used in classical M-estimation theory \cite{van2000asymptotic}. The $C^3$ requirement is used only to calculate and control the remainder in a second-order Taylor expansion, while the PD-Hessian requirement ensures the quadratic form $\bar{H}^{-1}$ is well defined.
	
	The proof proceeds in three steps: (i) expand $R$ around $\theta^\star$ to obtain the excess risk as a quadratic form in $\hat\theta - \theta^\star$, (ii) characterize $\hat\theta - \theta^\star$ as a sample-mean quantity and take expectations to produce the alignment term, (iii) expand $R$ around $\hat\theta$ under random perturbations $\xi 
	\sim \mathcal{N}(0, \sigma^2 I)$ to produce the curvature term. 
	
	\paragraph{Step 1: Quadratic expansion of $R$ at $\theta^\star$.} Because $\theta^\star$ minimizes the population risk $R$, we have $\nabla R(\theta^\star) = \bar{g} = 0$. Under Assumption~\ref{ass:regularity}, Taylor's theorem gives, for any $\delta \in \mathbb{R}^d$ sufficiently small,
	\begin{equation}
		R(\theta^\star + \delta) = R(\theta^\star) + \nabla R(\theta^\star) \delta + \tfrac{1}{2}\delta^T \nabla^2 R(\theta^\star)\delta + r_3(\delta),
		\label{eq:taylor_R}
	\end{equation}

	where $|r_3(\delta)| \le C\|\delta\|^3$ for some constant $C$ depending on the uniform bound on third derivatives, by definition $\nabla R(\theta^\star) = \bar g = 0$, and $\nabla^2 R(\theta^\star) = \bar H$. Therefore, subtracting $R(\theta^\star)$ yields:
	\begin{equation}
		R(\theta^\star + \delta) - R(\theta^\star) = \tfrac{1}{2}\delta^T \bar{H}\delta + O(\|\delta\|^3).
		\label{eq:excess_quad}
	\end{equation}
	Equation~\eqref{eq:excess_quad} is the standard quadratic excess-risk formula, which holds for any $\delta$. We now look at $\delta = \hat\theta - \theta^\star$ and analyze the distance between the empirical minimizer ($\hat\theta$) and population minimizer ($\theta^\star$).
	
	\paragraph{Step 2: Characterization of $\hat\theta - \theta^\star$.} Note that the empirical minimizer satisfies $\nabla \hat R(\hat\theta) = 0$ across $K$ distributions. Expanding $\nabla \hat R$ around $\theta^\star$ using Taylor's theorem yields,
	
	\begin{equation}
		\nabla \hat R(\hat\theta) = \nabla \hat R(\theta^\star) + \nabla^2 \hat R(\theta^\star) (\hat \theta -\theta^\star) + O(\| \hat \theta - \theta^\star \|^2),
		\label{eq:implicit_0}
	\end{equation}
	or,
	\begin{equation}
		0 = \nabla \hat R(\hat\theta) = \hat g + \hat H(\hat\theta - \theta^\star) + O(\|\hat\theta - \theta^\star\|^2),
		\label{eq:implicit}
	\end{equation}
	where $\hat g := \frac{1}{K}\sum\limits_{k=1}^K g_{e_k}(\theta^\star)$ and $\hat H := \frac{1}{K}\sum\limits_{k=1}^K H_{e_k}(\theta^\star)$. 
	
	Solving for $\hat\theta - \theta^\star$:
	\begin{equation}
		\begin{aligned}
			\hat H(\hat\theta - \theta^\star) &= -\hat g - O(\|\hat\theta - \theta^\star\|^2) \\
			\hat\theta - \theta^\star &= -\hat H^{-1} \hat g - \hat H^{-1} O(\|\hat\theta - \theta^\star\|^2) \\
			\hat\theta - \theta^\star &= -\hat H^{-1}\hat g + O(\|\hat\theta - \theta^\star\|^2) \\
		\end{aligned}
		\label{eq:delta_implicit}
	\end{equation}
	
	Where from Assumption \ref{ass:regularity}, $\bar H$ is PD, so assuming enough data so will $\hat H$. To find the ``size'' of the $(\hat{\theta} - \theta^\star) \approx -\hat{H}^{-1}\hat{g}$, we take the vector norm (magnitude) of both sides. Using the standard property of matrix norms and the triangle inequality, we can separate the matrix from the vector to create an upper bound:
	
	\begin{equation}
		||\hat{\theta} - \theta^\star|| \approx ||-\hat{H}^{-1}\hat{g}|| \le ||\hat{H}^{-1}|| \cdot ||\hat{g}||. 
	\end{equation}
	
	The eigenvalues of an inverse matrix are simply $1 / \lambda$. Because the original eigenvalues ($\lambda$) of $\hat H$ are bounded away from zero, the eigenvalues of the inverse matrix cannot blow up to infinity. As the those eigenvalues are capped, $||\hat{H}^{-1}||$ is just some finite constant number $c$ (based on the largest eigenvalue). Therefore:
	
	\begin{equation}
		\begin{aligned}
			||\hat{\theta} - \theta^\star|| &\le c||\hat{g}|| \\
			||\hat{\theta} - \theta^\star|| &\le O(||\hat{g}||) \\
			||\hat{\theta} - \theta^\star||^2 &\le O(||\hat{g}||^2)
		\end{aligned}
		\label{eq:delta_bigo}
	\end{equation}
	
	And substituting \ref{eq:delta_bigo} to \ref{eq:delta_implicit}, yields:
	
	\begin{equation}
		\hat \theta - \theta^\star = -\hat{H}^{-1}\hat{g} + O(||\hat{g}||^2)
		\label{eq:delta_bigo_clean}
	\end{equation}
	
	This is the standard implicit-function argument used in M-estimator asymptotics \cite{van2000asymptotic}. We now continue to replace $\hat H$ with $\bar H$, which concentrates to $||\hat{H} - \bar{H}||$, as $K \rightarrow \infty$. By the  law of large numbers, as $K$ increases, the 
	empirical average converges to the true expectation: $\hat{H} \rightarrow 
	\bar{H}$. Recall that $\hat{H}$ is the empirical Hessian, calculated 
	by averaging the actual Hessians observed across $K$ randomly sampled 
	training distributions: $\hat{H} = \frac{1}{K}\sum\limits_{k=1}^K H_{e_k}$. 
	Because the training distributions $e_1, \dots, e_K$ are sampled 
	independently and identically (iid) from the meta-distribution 
	$\mathcal{P}$, each observed Hessian $H_{e_k}$ is an independent random 
	matrix. This means $\hat{H}$ is simply a sample mean, and we want to know 
	how far this sample mean deviates from the true mean $\bar{H}$. 

	Let $\hat{H}_{ij}$ denote the scalar entry in the $i$-th row and $j$-th column of the empirical Hessian matrix, and $\bar{H}_{ij}$ denote the corresponding entry in the population Hessian. By definition, the empirical entry is a simple sample mean of the individual distribution Hessians:
	
	$$\hat{H}_{ij} = \frac{1}{K}\sum_{k=1}^K (H_{e_k})_{ij}.$$
	
	Since the third derivatives of the loss are uniformly bounded in a neighborhood of $\theta^*$, as per~\ref{ass:regularity2}, each entry $(H_{e_k})_{ij}$ is itself bounded, and there exists some finite variance 
	$\sigma_{ij}^2 < \infty$ such that:
	
	$$Var((H_{e_k})_{ij}) = \sigma_{ij}^2.$$
	
	Since each distribution is sampled i.i.d, from the Central Limit Theorem  (or Chebyshev's Inequality) for each element; as $K \rightarrow \infty$:
	
	$$\sqrt{K}(\hat{H}_{ij} - \bar{H}_{ij}) \rightarrow \mathcal{N}(0, \sigma_{ij}^2).$$
	
	Where by calculating the Frobenius norm of the difference, it can be easily shown that $||\hat{H} - \bar{H}||_F = O_P(K^{-1/2})$ and we can formally bound the random error matrix:
	
	\begin{equation}
		||\hat{H} - \bar{H}|| = O_P(K^{-1/2})
	\end{equation}
	
	Therefore the distance between the empirical sample $\hat{H}$ and the true population $\overline{H}$ shrinks at a rate proportional to $1/\sqrt{K}$.
	
	Again, following the same logic, by the Central Limit Theorem $\sqrt{K}\hat{g} \rightarrow \mathcal{N}(0, \Sigma_g)$,
	and the magnitude of the average gradient $||\hat{g}||$ shrinks at a rate of $O_P(K^{-1/2})$ and in turn the squared gradient term
	becomes $O(||\hat{g}||^2) = O_P(K^{-1})$.

	Thus, Equation \ref{eq:delta_bigo_clean} becomes:
	\begin{equation}
		\hat\theta - \theta^\star = -\bar H^{-1}\hat g + O_P(K^{-1}).
		\label{eq:delta_clean}
	\end{equation}
	
	\paragraph{Taking expectations.} Substitute Eq.~\eqref{eq:delta_clean} into Eq.~\eqref{eq:excess_quad}:
	\begin{align}
		R(\hat\theta) - R(\theta^\star) &= \tfrac{1}{2}(\bar H^{-1}\hat g)^T \bar H(\bar H^{-1}\hat g) + O_P(K^{-3/2}) \nonumber\\
		&= \tfrac{1}{2}\hat g^T \bar H^{-1}\hat g + O_P(K^{-3/2}),
		\label{eq:excess_hat}
	\end{align}
	where we used $\bar H^{-1}\bar H\bar H^{-1} = \bar H^{-1}$ and the symmetry $(\bar H^{-1})^T = \bar H^{-1}$. The $\|\delta\|^3$ term in Eq.~\eqref{eq:excess_quad} becomes $O_P(K^{-3/2})$ because $\|\hat g\| = O_P(K^{-1/2})$ by the central limit theorem. Now take expectation over the iid sampling of the $K$ training distributions. Using the matrix identity
	\begin{equation}
		\mathbb{E}[x^T A x] = \mathrm{tr}(A\,\mathrm{Cov}(x)) + \mathbb{E}[x]^T A\,\mathbb{E}[x]
		\label{eq:quadratic_identity}
	\end{equation}
	
	(valid for any random vector $x$ with finite second moments and any symmetric $A$) with $x = \hat g$ and $A = \bar H^{-1}$, we compute:
	\begin{itemize}
		\item $\mathbb{E}[\hat g] = \mathbb{E}\!\left[\frac{1}{K}\sum_k g_{e_k}\right] = \bar g = 0$ (since $\theta^\star$ is stationary for $R$), so the mean term vanishes.
		\item $\mathrm{Cov}(\hat g) = \frac{1}{K^2}\sum_k \mathrm{Cov}(g_{e_k}) = \frac{1}{K}\Sigma_g$ by independence of the distributions and due to the fact that each distribution is drawn from $\mathcal{P}$.
	\end{itemize}
	
	Substituting:
	\begin{equation}
		\mathbb{E}[\hat g^T \bar H^{-1}\hat g] = \mathrm{tr}\!\left(\bar H^{-1} \cdot \tfrac{1}{K}\Sigma_g\right) = \tfrac{1}{K}\mathrm{tr}(\bar H^{-1}\Sigma_g).
	\end{equation}
	Combining with Eq.~\eqref{eq:excess_hat}:
	\begin{equation}
		\mathbb{E}[R(\hat\theta) - R(\theta^\star)] = \tfrac{1}{2K}\mathrm{tr}(\bar H^{-1}\Sigma_g) + O(K^{-3/2}).
		\label{eq:alignment_term}
	\end{equation}
	This establishes the alignment term.
	
	\paragraph{Step 3: Expansion of $R$ at $\hat\theta$ under random perturpations.} We want to account for the fact that the learner does not exactly realize $\hat\theta$ and to capture the sharpness of the local landscape, we evaluate the expected risk of the empirical minimizer under random perturbations, e.g. due to optimization noise or finite-precision arithmetic. Let the deployed parameter be $\theta = \hat{\theta} + \xi$, where $\xi \sim \mathcal{N}(0, \sigma^2 I)$ is an isotropic Gaussian perturbation of scale $\sigma > 0$.
	
	By expanding $R$ at $\hat\theta$, from Taylor's theorem:
	
	\begin{equation}
		R(\hat{\theta} + \xi) = R(\hat{\theta}) + \nabla R(\hat{\theta}) \xi  + \frac{1}{2}\xi^T \nabla^2 R(\hat{\theta}) \xi + O(||\xi||^3),
	\end{equation}
	
	and taking expectation with respect to the noise $\xi$,
	
	\begin{equation}
		\mathbb{E}_{\xi}[R(\hat{\theta} + \xi)] = R(\hat{\theta}) + \nabla R(\hat{\theta}) \mathbb{E}[\xi] + \frac{1}{2}\mathbb{E}[\xi^T \nabla^2 R(\hat{\theta}) \xi] + O(\sigma^3)
	\end{equation}

	The linear term vanishes because the perturbation is zero-mean ($\mathbb{E}[\xi] = 0$). For the quadratic term, apply the identity ~\eqref{eq:quadratic_identity} with $x = \xi$ (zero mean) and $A = \nabla^2 R(\hat\theta)$, using $\mathrm{Cov}(\xi) = \sigma^2 I$:
	
	\begin{equation}
		\frac{1}{2}\mathbb{E}[\xi^T \nabla^2 R(\hat{\theta}) \xi] = \frac{\sigma^2}{2}tr(\nabla^2 R(\hat{\theta}))
	\end{equation}
	
	Since $\nabla^2 R(\hat\theta) = \bar H + O_P(K^{-1/2})$ by continuity of the Hessian and the fact that $\hat\theta \to \theta^\star$, we can replace $\nabla^2 R(\hat\theta)$ with the population Hessian $\bar H$ up to a correction that multiplies the existing $\sigma^2/2$ factor and contributes at order $O(\sigma^2 K^{-1/2})$, which is absorbed into the stated error. Thus, the expected risk under perturbation $\xi$ is:
	\begin{equation}
		\mathbb{E}_{\xi}[R(\hat{\theta})] = R(\hat{\theta}) + \frac{\sigma^2}{2}tr(\bar{H}) + O(\sigma^3) + O(\sigma^2 K^{-1/2})
		\label{eq:curvature_term}
	\end{equation}
	
	\paragraph{Combining.} Taking expectation of Eq.~\eqref{eq:curvature_term} over the distribution sampling and adding Eq.~\eqref{eq:alignment_term}:
	\begin{equation}
		\mathbb{E}[\mathbb{E}_{\xi}[R(\hat{\theta})]] - R(\theta^*) = \frac{1}{2K}tr(\bar{H}^{-1}\Sigma_g) + \frac{\sigma^2}{2}tr(\bar{H}) + O(K^{-3/2}) + O(\sigma^3),
	\end{equation}
	which is Eq.~\eqref{eq:decomposition}. \qed
\end{proof}

\subsection{Proof of Counterexample~\ref{thm:decoupling}: Decoupling of flatness and gradient alignment}
\label{app:thm2_proof}

\begin{proof}[Proof of Counterexample~\ref{thm:decoupling}]
	We restate the counterexample included in the main text for completeness.
	
	\setcounter{counterexample}{0}
	\begin{neuraltheorem}
		\begin{counterexample}[Decoupling of flatness and gradient alignment]
			\label{thm:decoupling2}
			Consider the family of multi-distribution learning problems with per-distribution losses
			\begin{equation}
				\mathcal{L}_e(\theta) =
				\tfrac{1}{2}\theta^T A\theta + b_e^T \theta,
				\qquad
				\mathbb{E}_e[b_e] = 0,
				\label{eq:quadratic_family2}
			\end{equation}
			parameterized by a symmetric PD matrix $A \in \mathbb{R}^{d\times d}$ and a collection of linear coefficients $\{b_e\}$ satisfying $\mathbb{E}_e[b_e] = 0$. For every $M > 0$, there exist instances of Eq.~\eqref{eq:quadratic_family2} such that:
			\begin{enumerate}[label=(\roman*)]
				\item \textbf{Flat but misaligned:}
				$\mathrm{tr}(\bar H) \le M^{-1}$
				and
				$\mathrm{tr}(\bar H^{-1}\Sigma_g) \ge M$.
				
				\item \textbf{Aligned but sharp:}
				$\mathrm{tr}(\bar H^{-1}\Sigma_g) \le M^{-1}$
				and
				$\mathrm{tr}(\bar H) \ge M$.
			\end{enumerate}
			In particular, neither $\mathrm{tr}(\bar H)$ nor $\mathrm{tr}(\bar H^{-1}\Sigma_g)$ can be bounded above by any function of the other alone.
		\end{counterexample}
	\end{neuraltheorem}

	For the family in the counterexample we have $\nabla\mathcal{L}_e(\theta) = A\theta + b_e$ and $\nabla^2\mathcal{L}_e(\theta) = A$. Therefore $\bar H = A$ is fixed by the choice of $A$ alone, and the population minimizer is $\theta^\star = -A^{-1}\mathbb{E}_e[b_e] = 0$. At $\theta^\star$, the per-environment gradients are $g_e = b_e$, so
	\begin{equation}
		\Sigma_g = \mathbb{E}_e[b_e b_e^T],
	\end{equation}
	which is fixed by the choice of $\{b_e\}$ alone. The two quantities are thus independently controllable: $A$ determines $\bar H$ without affecting $\Sigma_g$, and $\{b_e\}$ determines $\Sigma_g$ without affecting $\bar H$.
	
	\paragraph{Construction of (i): flat but misaligned.} Take $d = 2$, $A = \lambda I$ with $\lambda = (2M)^{-1}$, and let $b_e$ be supported on $e_2$ with variance $v = 1$ . Then $\bar H = \lambda I$, so
	\begin{equation}
		\mathrm{tr}(\bar H) = 2\lambda = M^{-1},
	\end{equation}
	satisfying the first inequality. Also $\Sigma_g = \mathrm{diag}(0, v)$, so
	\begin{equation}
		\mathrm{tr}(\bar H^{-1}\Sigma_g) = \mathrm{tr}\!\left(\lambda^{-1}I \cdot \mathrm{diag}(0, v)\right) = v/\lambda = 2Mv.
	\end{equation}
	Choosing $v = 1$ gives $\mathrm{tr}(\bar H^{-1}\Sigma_g) = 2M \ge M$. Concretely, two environments with $b_1 = (0, +1)^T$ and $b_2 = (0, -1)^T$ realize this $\Sigma_g$.
	
	\paragraph{Construction of (ii): aligned but sharp.} Take $d = 2$, $A = \lambda I$ with $\lambda = M/2$, and $b_e$ supported on $e_1$ with variance $v = 1/2$. Then
	\begin{equation}
		\mathrm{tr}(\bar H) = 2\lambda = M, \qquad \mathrm{tr}(\bar H^{-1}\Sigma_g) = v/\lambda = M^{-1}.
	\end{equation}
	Two environments with $b_1 = (+1/\sqrt{2}, 0)^T$ and $b_2 = (-1/\sqrt{2}, 0)^T$ realize the required $\Sigma_g$.
	
	\paragraph{Conclusion.} In both constructions, $\bar H$ and $\Sigma_g$ are chosen independently to drive one quantity arbitrarily small and the other arbitrarily large. If there existed a function $\phi$ such that $\mathrm{tr}(\bar H^{-1}\Sigma_g) \le \phi(\mathrm{tr}(\bar H))$ for all instances in the family, then construction~(i) with $M \to \infty$ would contradict $\phi(M^{-1}) < \infty$, symmetrically for the reverse direction. \qed
\end{proof}

\subsection{Limitations}
\label{app:limitations}

\paragraph{Local, not global.} Theorem~\ref{thm:decomposition} is an asymptotic expansion valid in a neighborhood of $\theta^\star$ where the quadratic approximation holds. In non-convex deep learning landscapes, there may be many $\bar g = 0$ points, and the theorem does not address which one the optimizer reaches. This is the ``hypothesis selection'' gap: among the manifold of stationary points, flatness and alignment act as refinement criteria, but the decomposition alone does not prove that optimization converges to the jointly-optimal refinement. 

\paragraph{Not a PAC-Bayes bound.} Although Step 3 of the proof of Theorem~\ref{thm:decomposition} echoes the structure of Gaussian PAC-Bayes analyses the result is an \emph{expectation-form} excess-risk decomposition, not a high-probability uniform bound. A full PAC-Bayes statement would add a KL-divergence term, bound the failure probability explicitly, and require a different argument for the alignment term. The result stated here is sufficient for its purpose, i.e, identifying which quantities appear in the excess risk and in what functional form, but should not be read as a finite-sample generalization guarantee.

\paragraph{Quadratic family is toy.} The decoupling construction in Counterexample~\ref{thm:decoupling} uses a purely quadratic family, in which $\bar H$ is globally constant and independent of $\{b_e\}$. A skeptical reader might object that real deep networks are highly non-convex and that the independence of $\bar H$ and $\Sigma_g$ observed in the quadratic family may not hold globally. The correct response is that Counterexample~\ref{thm:decoupling} provides a strong indication, but no guarantees. If even in a simple setting (exact quadratic, infinite data, closed-form) neither criterion alone suffices, then a fortiori neither is expected to suffice in the harder, non-convex setting.

\section{Motivating Example: Why Both Properties Are Necessary}
\label{app:motivating_example}

Theorem~\ref{thm:decomposition} decomposes the excess risk into an alignment 
term and a curvature term, while Theorem~\ref{thm:decoupling} shows that 
neither term can be bounded as a function of the other. The example below 
illustrates how this is reflected in a concrete setting where each failure mode 
appears along an orthogonal eigendirection of $\bar H$. The example uses a linear model on a Gaussian data-generating process with two domains, chosen so that the loss landscape admits closed-form analysis.

\paragraph{Data-generating process.} Consider a binary classification task with label $y \in \{-1, +1\}$ drawn uniformly, and input $x = (x_\text{inv}, x_\text{spur}) \in \mathbb{R}^2$ consisting of an invariant feature and a spurious feature. For domain $k \in \{1, 2\}$, the features are generated conditionally on $y$ as:
\begin{align}
	x_\text{inv} \mid y &\sim \mathcal{N}(y \cdot \mu_\text{inv},\; \sigma_\text{inv}^2), \label{eq:dgp_inv}\\
	x_\text{spur}^{(k)} \mid y &\sim \mathcal{N}(y \cdot c_k,\; \sigma_\text{spur}^2), \label{eq:dgp_spur}
\end{align}
where $c_1 = +\mu_\text{spur}$ and $c_2 = -\mu_\text{spur}$. The invariant 
feature is causally linked to $y$ identically in both domains, while the 
spurious feature's correlation with $y$ reverses across domains. Crucially, we 
set $\sigma_\text{inv}^2 \gg \sigma_\text{spur}^2$: the invariant feature is 
noisy (high variance) while the spurious feature is clean (low variance). We 
adopt the concrete values $\mu_\text{inv} = 1$, $\sigma_\text{inv}^2 = 9$, 
$\mu_\text{spur} = 2$, $\sigma_\text{spur}^2 = 0.01$ throughout the example.

\paragraph{Loss landscape.} We train a linear model $f(x; \theta) = 
\theta_\text{inv}\, x_\text{inv} + \theta_\text{spur}\, x_\text{spur}$ with 
mean squared error. The expected loss for domain $k$ expands to:

\begin{equation}
	\mathcal{L}_k(\theta) = \frac{1}{2} - \theta_\text{inv}\mu_\text{inv} - \theta_\text{spur} c_k + \frac{1}{2}\theta_\text{inv}^2(\sigma_\text{inv}^2 + \mu_\text{inv}^2) + \frac{1}{2}\theta_\text{spur}^2(\sigma_\text{spur}^2 + \mu_\text{spur}^2) + \theta_\text{inv}\theta_\text{spur}\,\mu_\text{inv} c_k,
	\label{eq:loss_k}
\end{equation}

using the moments $\mathbb{E}[y^2] = 1$, $\mathbb{E}[y\,x_\text{inv}] = 
\mu_\text{inv}$, $\mathbb{E}[x_\text{inv}^2] = \sigma_\text{inv}^2 + 
\mu_\text{inv}^2$, and so on. Because each sub-objective loss is quadratic, the 
Hessian of $\mathcal{L}_k$ is constant:

\begin{equation}
	H_k = \begin{pmatrix} \sigma_\text{inv}^2 + \mu_\text{inv}^2 & \mu_\text{inv}\, c_k \\ \mu_\text{inv}\, c_k & \sigma_\text{spur}^2 + \mu_\text{spur}^2 \end{pmatrix}.
	\label{eq:hessian_k}
\end{equation}

Substituting our numerical values gives $H_{11} = 10$, $H_{22} = 4.01$, and 
$H_{12}^{(1)} = +2$, $H_{12}^{(2)} = -2$. Two features of this landscape are 
critical. First, the curvature along $\theta_\text{inv}$ is $2.5\times$ larger 
than along $\theta_\text{spur}$, meaning the invariant direction forms a narrow 
valley while the spurious direction is comparatively flat. Second, the 
off-diagonal term $\mu_\text{inv} c_k$ flips sign across domains, coupling the 
two parameters differently in each domain's loss surface.

\paragraph{Aggregate loss and its minima.} The aggregate (ERM) loss $\bar{\mathcal{L}} = \frac{1}{2}(\mathcal{L}_1 + \mathcal{L}_2)$ has the Hessian:
\begin{equation}
	\bar{H} = \frac{1}{2}(H_1 + H_2) = \begin{pmatrix} 10 & 0 \\ 0 & 4.01 \end{pmatrix},
	\label{eq:hessian_agg}
\end{equation}
where the cross-domain averaging cancels the off-diagonal terms. The aggregate 
minimum is $\theta^* = \bar{H}^{-1}\nabla_\theta\bar{\mathcal{L}}(0) = (0.1, 
0)^T$, confirming that the ERM optimum correctly assigns zero weight to the 
spurious feature.

However, the individual domain minima tell a different story. Setting 
$\nabla_\theta \mathcal{L}_k = 0$ and solving yields domain-specific optima 
that place substantial weight on $\theta_\text{spur}$ with opposite signs 
across domains. In each domain individually, the spurious feature is a 
low-noise, high-signal predictor and can lead to training models that will not
generalize.

\paragraph{Failure mode 1: Flatness without alignment.} A method that searches 
for low-curvature solutions within a single domain finds the per-domain optima 
$\theta_k^* = H_k^{-1}b_k \approx (0.0003, \pm 0.499)^T$, which lie almost 
entirely along the spurious axis with reversed signs across domains. This is a 
structural consequence of the within-domain feature statistics, as the spurious 
feature has signal-to-noise ratio $\mu_\text{spur}^2/\sigma_\text{spur}^2 = 
400$, far exceeding the invariant feature's $\mu_\text{inv}^2/\sigma_\text{inv}^2 \approx 0.11$. Within either domain 
individually, the spurious feature is the better predictor. A flatness-only criterion cannot distinguish a flat minimum that uses spurious features from 
one that uses invariant ones, and the alignment term of Theorem~\ref{thm:decomposition} is what penalizes the difference.

\paragraph{Failure mode 2: Alignment without flatness.}  A method that enforces gradient agreement across domains correctly identifies the shared direction. The per-domain gradients at the origin are $\nabla\mathcal{L}_1(\mathbf{0}) = (-1, -2)^T$ and $\nabla\mathcal{L}_2(\mathbf{0}) = (-1, +2)^T$, which agree on 
the invariant component and conflict on the spurious one. The optimizer converges to $\theta^* = (0.1, 0)^T$, i.e. the aggregate minimum, which uses exclusively the invariant feature. However, $\theta^*$ sits on the 
\textit{sharp} eigendirection of $\bar H$ (eigenvalue 10), so a parameter 
displacement $\delta$ along $\theta_\text{inv}$ leads to a loss increase of $\frac{1}{2} \cdot 10 \cdot \delta^2 = 5\delta^2$. Therefore, under distribution shift the optimal coefficient moves and the steep curvature amplifies the resulting error. The curvature term of 
Theorem~\ref{thm:decomposition} is what penalizes this.

\paragraph{Connection to Counterexample~\ref{thm:decoupling}.} The two failure modes 
occur along orthogonal eigendirections of $\bar H$, which is what makes this two-dimensional construction minimal. The cross-domain gradient covariance at the aggregate minimum is $\Sigma_g = \mathrm{diag}(0, 3.24)$, supported entirely on the flat eigendirection of $\bar H$. The alignment term evaluates 
to $\mathrm{tr}(\bar H^{-1}\Sigma_g) = 3.24/4.01 \approx 0.81$, which is 
dominated by the ratio of gradient variance to the small eigenvalue. This is, 
structurally, an instance of construction~(i) of Counterexample~\ref{thm:decoupling}, as the support of $\Sigma_g$ coincides with the flat eigendirection of $\bar 
H$, so the alignment term is amplified by the inverse curvature in that 
direction. The two failure modes the example exhibits are not just numerical 
coincidences but the structural examples of the connection between $\Sigma_g$ 
and $\bar H$.


\section{Scale Invariance and Scale-Dependent Flatness in SAM}
\label{app:scale_invariance}

A well-documented limitation that SAGE addresses is with regards to the interaction between SAM's fixed perturbation radius and the scale invariance of 
modern architectures~\cite{dinh2017sharp}. Neural networks employing ReLU 
activations and normalization layers exhibit positive scale invariance, meaning that scaling a weight matrix $W$ by $\alpha > 0$ to obtain $W' = \alpha W$ (with the subsequent layer scaled by $1/\alpha$) leaves the network's output unchanged. The loss landscape is therefore functionally identical, yet the gradient scales inversely:

\begin{equation}\label{eq:scale_inv}
	\nabla_{W'} \mathcal{L} = \frac{1}{\alpha} \nabla_W \mathcal{L}.
\end{equation}

Because SAM uses a fixed $L_2$-radius $\rho$, the perturbation's size relative 
to the parameter norm changes drastically with $\alpha$. Shrinking the weights 
($\alpha < 1$) makes the fixed-size perturbation push parameters proportionally 
further from their current values, inflating the apparent sharpness; 
conversely, scaling weights up ($\alpha \gg 1$) makes the same perturbation 
negligibly small relative to the parameter norm, producing artificially flat 
sharpness estimates. This phenomenon, termed ``scale-dependent 
flatness''~\cite{kwon2021asam}, means that SAM's sharpness measure conflates 
the true geometry of the loss surface with the arbitrary scale of the 
parameterization.

\begin{figure}[htbp]
	\centering
	\includegraphics[width=0.65\textwidth]{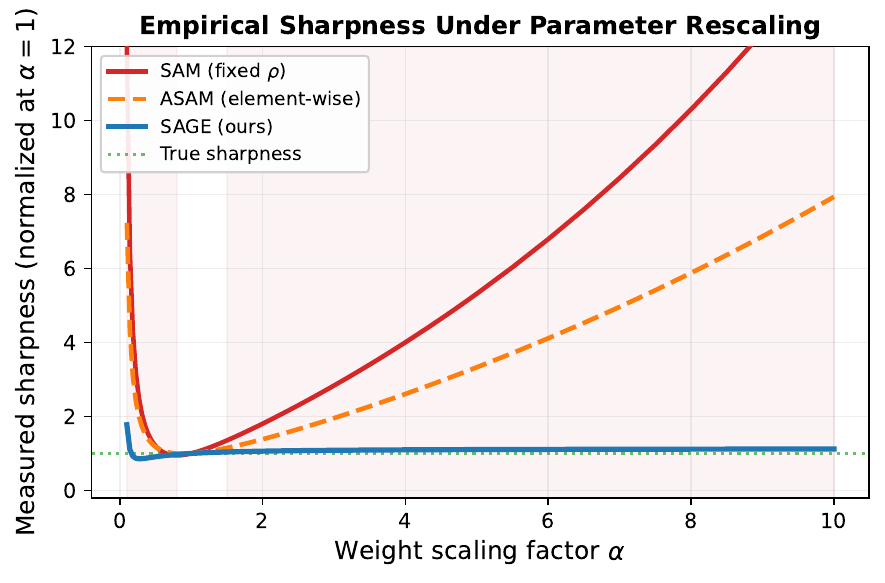}
	\caption{\textbf{Empirical scale invariance.} Measured sharpness as a function of weight rescaling factor $\alpha$ for a two-layer MLP (no 
		normalization layers). The network computes the same function at all 
		$\alpha$, so the true sharpness (green, dotted) is constant. Both SAM (red) and ASAM (orange, dashed) produce scale-dependent sharpness estimates that diverge away from $\alpha\!=\!1$, despite the underlying function being identical. SAGE (blue) is approximately invariant to the rescaling, correctly reflecting the true geometry.}
	\label{fig:fake_flatness}
\end{figure}

\subsection{Illustrative Example}

To empirically demonstrate the scale-invariance failure, we train a two-layer 
MLP without normalization layers ($\texttt{Linear}(2, 64) \to \texttt{ReLU} \to 
\texttt{Linear}(64, 2)$) on a synthetic concentric-circles task until 
convergence. At the converged parameters $\theta^*$, we apply the rescaling 
$W_1 \mapsto \alpha W_1$, $b_1 \mapsto \alpha b_1$, $W_2 \mapsto W_2 / \alpha$ 
for $\alpha \in [0.1, 10.0]$. As previously mentioned, the network 
computes the same function for all $\alpha$. For this example, we measure the sharpness $\mathcal{L}(\theta + \epsilon) - \mathcal{L}(\theta)$ at each 
$\alpha$ using SAM (fixed $\rho$), ASAM (element-wise adaptive), and SAGE 
(spectral, scale-adaptive). As shown in Figure~\ref{fig:fake_flatness}, both 
SAM and ASAM produce sharpness estimates that diverge 
away from $\alpha = 1$, while SAGE remains approximately constant, correctly 
reflecting the invariance of the underlying loss geometry\footnote{The effect on SAM's fixed-$\rho$ perturbation depends on which layer dominates. In this architecture, $W_2$'s gradient grows and $\alpha$ causes the perturbation to push harder on $W_2$, thus increasing sharpness for large $\alpha$.}.

\section{Experimental details \& Hyperparameters}
\label{app:experiments}

\subsection{Details \& Hyperparameters}
\paragraph{Domain Generalization}
For DG, we follow the protocol of the widely adopted and challenging DomainBed~\cite{gulrajani2020domainbed} benchmark, and conduct experiments on 
five image classification datasets. Specifically, we follow the 
leave-one-domain-out evaluation protocol for PACS~\cite{li2017deeper} (9{,}991 
images, 4 domains and 7 classes), VLCS~\cite{fang2013unbiased} (10{,}729 
images, 4 domains and 5 classes), OfficeHome~\cite{venkateswara2017deep} 
(15{,}588 images, 4 domains and 65 classes), TerraIncognita~\cite{beery2018recognition} (24{,}788 images, 4
domains and 10 classes), and DomainNet~\cite{peng2019moment} (586{,}575 images, 
6 domains and 345 classes), and report the average top-1 accuracy over 3 runs 
based on training-domain split validation. Following previous implementations~\cite{liexploring, zhang2024domaininspired}, we select a batch size of 12 where images are uniformly sampled across source domains, set an initial learning rate of $1e-6$ and employ the Adam optimizer for finetuning a 
CLIP~\cite{radford2021learning} pretrained ViT-B/16~\cite{dosovitskiy2021an} . 
Regarding SAGE-specific hyperparameters, we select $\rho=1e-6$ 
and $\gamma = 1e-6$ across all datasets.

\paragraph{Multi-Task Learning}
For the MTL setting, we follow the standard protocol in recent literature~\cite{ban2025samo, liu2023famo, navon2022multi} and evaluate on Cityscapes~\cite{cordts2016cityscapes}, a dataset of urban street scenes containing $5{,}000$ images with pixel-level annotations, and it supports two tasks: 7-class semantic segmentation and depth estimation. In contrast, NYU-v2~\cite{silberman2012indoor} focuses on indoor environments, providing $1{,}449$ densely annotated images and enabling three tasks: 13-class semantic segmentation, depth estimation, and surface normal prediction. For training, we employ MTAN~\cite{liu2019end} as the shared backbone, with task-specific attention modules built on top of a SegNet~\cite{badrinarayanan2017segnet}. 
Following previous implementations~\cite{ban2025samo, liu2023famo} the model is 
trained for 200 epochs with a batch size of 8 for Cityscapes and 2 for NYU-v2. 
The learning rate is set to $1e-4$ for the first 100 epochs and is
then halved for the remainder. Regarding SAGE hyperparameters, we set $\rho = 1e-4$ and $\gamma = 1e-5$ for both datasets.

\subsection{Infrastructure}
All experiments were conducted on a cluster containing $4 \times 40$ GB NVIDIA A100 GPU cards, split into 8 20GB virtual MIG devices and $1 \times$ 24GB NVIDIA RTX A5000 GPU card, via a SLURM workload manager.

\section{Additional Ablations}
\label{app:ablations}
In Figure \ref{fig:grad_agreement} we provide plots of the cosine similarity between the average training-domain gradient and the gradient on the unseen target during training for the DomainBed datasets. The similarity is calculated at every training step , while the lines are smoothed for better visibility.

\begin{figure}[htbp]
	\centering
	
	\begin{subfigure}[t]{0.9\linewidth}
		\centering
		\includegraphics[width=\linewidth]{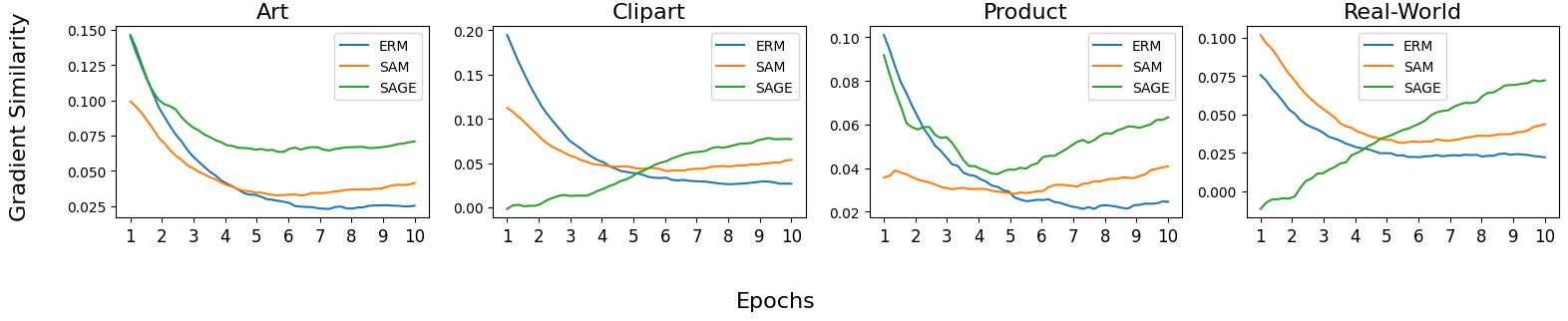}
		\caption{OfficeHome}
		\label{fig:grad_agreement_oh}
	\end{subfigure}
	
	\vspace{0.5em}
	
	\begin{subfigure}[t]{0.9\linewidth}
		\centering
		\includegraphics[width=\linewidth]{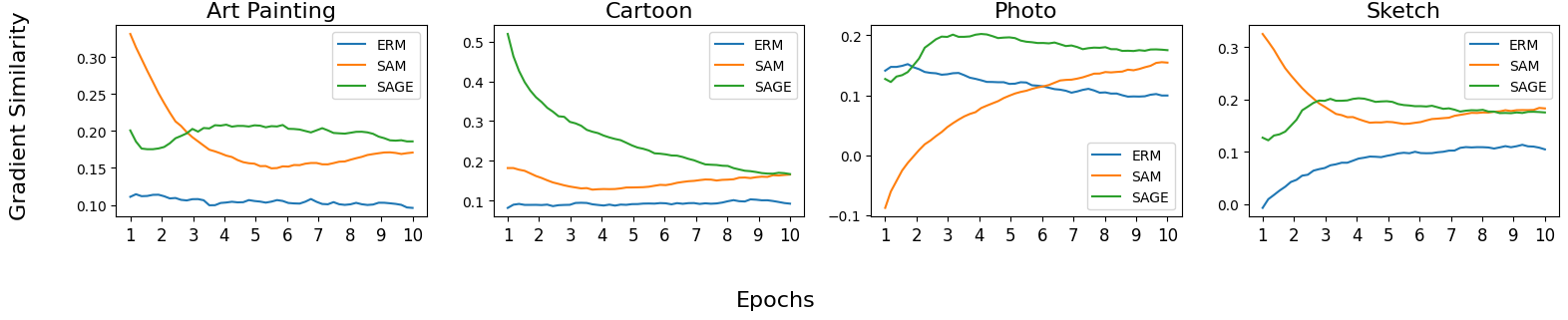}
		\caption{PACS}
		\label{fig:grad_agreement_pacs}
	\end{subfigure}
	
	\vspace{0.5em}
	
	\begin{subfigure}[t]{0.9\linewidth}
		\centering
		\includegraphics[width=\linewidth]{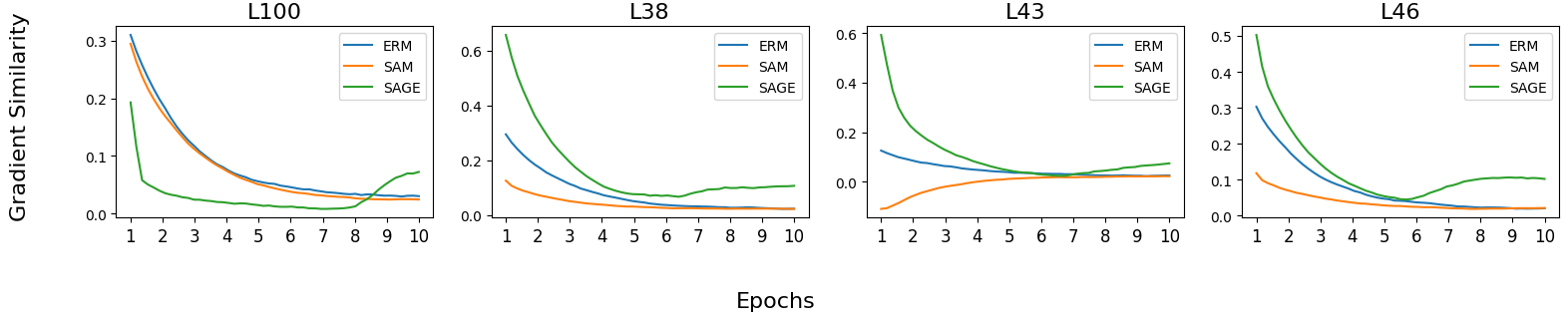}
		\caption{TerraIncognita}
		\label{fig:grad_agreement_ti}
	\end{subfigure}
	
	\vspace{0.5em}
	
	\begin{subfigure}[t]{0.9\linewidth}
		\centering
		\includegraphics[width=\linewidth]{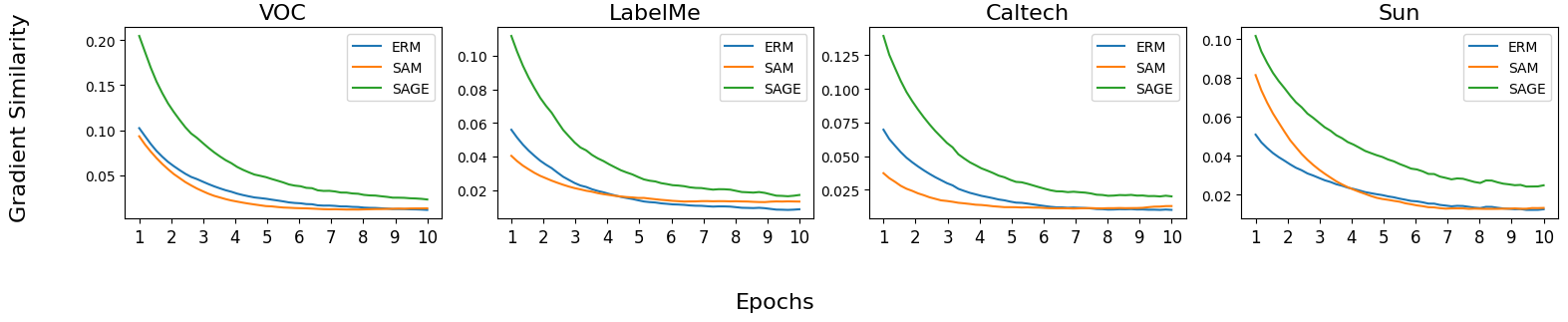}
		\caption{VLCS}
		\label{fig:grad_agreement_vlcs}
	\end{subfigure}
	
	\caption{Gradient alignment for unseen domains during model training on the datasets of DomainBed.}
	\label{fig:grad_agreement}
\end{figure}

\section{Computational Analysis}
\label{app:complexity}

Each SAGE training iteration requires two forward-backward passes through the 
network, identical in structure to SAM and all SAM-like methods. The first pass 
computes the aggregate and per-environment gradients $g_{e_k}$
from a single forward call over the concatenated mini-batches from all $K$ distributions (domains in DG or tasks in MTL), incurring no additional forward passes beyond what ERM already performs with the same batch. From these 
per-distribution gradients, the pairwise cosine similarity $S(\theta)$
and the noise scale $\beta$ are computed in $\mathcal{O}(K^2 d)$
time, which is negligible relative to the cost of backpropagation for any practical number of environments ($K \leq 6$ in all of our experiments). The spectral perturbation replaces SAM's $L_2$-normalization of the gradient with 
$T = 5$ Newton-Schulz iterations per weight matrix, each 
consisting of a single matrix multiplication and a matrix subtraction. These 
operations are fully parallelized on modern GPU hardware and add negligible 
overhead. The second forward-backward pass at the perturbed point $\theta + \epsilon$ is identical to that of SAM. Finally, the noise injection
$\beta\,\xi$ at the descent step amounts to sampling a Gaussian vector and a single multiply-add operation, which is $\mathcal{O}(d)$. In total, SAGE's per-iteration cost is approximately $2\times2$ that of ERM, the same factor as SAM, with the additional operations (cosine similarities, Newton-Schulz iterations, and noise sampling) contributing an
overhead that is negligible in practice compared to the forward-backward 
passes. Importantly, inference is entirely unaffected by SAGE, as all 
modifications occur exclusively during the training optimization loop.

\section{Pseudo code of the proposed SAGE algorithm}
\label{app:sage_algorithm}

\begin{algorithm}[h]
	\caption{SAGE: Spectral-Aware and Gradient-Aligned Exploration}
	\label{alg:sage}
	\small
	\begin{algorithmic}[1]
		\REQUIRE Learning rate $\eta$, perturbation radius $\rho$, noise scale $\gamma$, sub-objectives $\{D_1, \dots, D_K\}$, feature extractor $f_\theta$, cosine similarity function ($cos$), Newton-Schulz iterations $T$, loss function $\ell$, base optimizer OPT(), total number of training operations $n$.
		\FOR{t $\leftarrow$ 1 to $n$}
		\STATE Sample mini-batches $\{(x_k, y_k)\}_{k=1}^K$ from each sub-objective
		\STATE \textbf{// Phase 1: Sub-objective and total gradients}
		\STATE Compute logits $\hat{y} = f_\theta(\text{concat}(x_1, \dots, x_K))$
		\FOR{$k = 1, \dots, K$}
		\STATE $\mathcal{L}_k \leftarrow \ell(\hat{y}_k, y_k)$ 
		\STATE $g_k \leftarrow \nabla_\theta \mathcal{L}_k$
		\ENDFOR
		\STATE $\bar{g} \leftarrow \sum_{k} w_k \, g_k$  \hfill \COMMENT{sample-weighted total gradient}
		\STATE \textbf{// Phase 2: Calculate gradient agreement \& noise scale}
		\STATE $S \leftarrow \frac{2}{K(K-1)} \sum_{i<j} \cos(g_i, g_j)$
		\STATE $\beta \leftarrow \gamma \, (1 - S)$
		\STATE \textbf{// Phase 3: Compute Spectral Perturbations (Muon Logic)}
		\FOR{each weight matrix $W \in \theta$ and its corresponding gradient $G \in g$}
		\IF{$G$ is a matrix or higher-order tensor}
		\STATE $G_{\text{ortho}} \leftarrow \text{NewtonSchulz5}(G)$ \hfill \COMMENT{Orthogonalize via Newton-Schulz}
		\STATE $\epsilon_W \leftarrow \rho \|W\|_F G_{\text{ortho}}$ \hfill \COMMENT{Scaled spectral perturbation}
		\ELSE
		\STATE $\epsilon_W \leftarrow \rho \frac{G}{\|G\|_2}$
		\ENDIF
		\ENDFOR
		\STATE \textbf{// Phase 4: Apply Perturbation (Ascent Step)}
		\STATE $\theta \leftarrow \theta + \epsilon$
		
		\STATE \textbf{// Phase 5: Adversarial Forward \& Backward Pass}
		\STATE Compute perturbed loss $\mathcal{L}_{\text{pert}} \leftarrow \ell(f_\theta(\text{concat}(x_1, \dots, x_K)))$
		\STATE Compute perturbed gradient $g_{\text{pert}} \leftarrow \nabla_{\theta} \mathcal{L}_{\text{pert}}$
		
		\STATE \textbf{// Phase 6: Restore Weights \& Inject Gradient Noise (Descent Step)}
		\STATE $\theta \leftarrow \theta - \epsilon$ \hfill
		\COMMENT{Restore clean weights}
		\STATE $g_{\text{final}} \leftarrow g_{\text{pert}} + \beta \mathcal{N}(0, \mathbf{I})$ \hfill \COMMENT{Inject domain-conflict noise}
		
		\STATE \textbf{// Phase 7: Optimizer Step}
		\STATE $\theta \leftarrow \text{OPT}(\theta,\eta, g_{\text{final}})$
		\ENDFOR
	\end{algorithmic}
\end{algorithm}



\clearpage

\end{document}